 %%%%%%%%%%%%%%%%%%%%%%%%%%%%%%%%%%%%%%%%%%%%%%%%%%%%%%%%%%%%%%%%%%%%%%%%%%%%%%%%
%2345678901234567890123456789012345678901234567890123456789012345678901234567890
%        1         2         3         4         5         6         7         8

\documentclass[letterpaper, 10 pt, conference]{ieeeconf}  % Comment this line out if you need a4paper

\IEEEoverridecommandlockouts                              % This command is only needed if 
                                                          % you want to use the \thanks command

\overrideIEEEmargins                                      % Needed to meet printer requirements.

\pdfminorversion=4			% ICRA standard
\usepackage[utf8]{inputenc}

\usepackage{amsfonts}       % An extended set of fonts for use in mathematics

\usepackage{amsthm}
\usepackage{mathtools}      % Loads amsmath
\usepackage{amssymb}        % provides AMS fonts and symbols ex. \mathbb{...}
\usepackage{filecontents}
\usepackage{graphicx}       % enhanced support for graphics
\usepackage{marginnote}     % Add notes
\usepackage{marvosym}       % mvat
\usepackage{overpic}        % Add text on figures
\usepackage{tabularx}
\usepackage{subcaption}
\usepackage{cite}
\usepackage{color}
\usepackage{capt-of}
\usepackage[T1]{fontenc}
\usepackage[normalem]{ulem}
\usepackage{epstopdf}
\usepackage{enumitem}
\usepackage[normalem]{ulem}
\usepackage{lipsum}
\usepackage{url}
\usepackage[hidelinks]{hyperref}
\usepackage[dvipsnames]{xcolor}
\usepackage{diagbox}
\usepackage{multirow}
\usepackage{microtype}
\usepackage{xspace}
\usepackage{caption}
\captionsetup{font=small}

\makeatletter
\newif\if@restonecol
\makeatother

\usepackage[linesnumbered,ruled,vlined]{algorithm2e}%[ruled,vlined]{
\usepackage{algpseudocode}
\usepackage{amsmath}

%%%%%%%%%%%%%%%%%%%%%%%%%%%%%%%%%%%%%%%%%%%%%%%%%%%%%%%%%%%%%%%%%%%%%%%%
%%%%% Side comments   %%%%%%%%%%%%%%%%%%%%%%%%%%%%%%%%%%%%%%%%%%%%%%%%%%
%%%%%%%%%%%%%%%%%%%%%%%%%%%%%%%%%%%%%%%%%%%%%%%%%%%%%%%%%%%%%%%%%%%%%%%%
\newif\ifdraft
\draftfalse
%\drafttrue

\newif\ifarxiv
\arxivtrue
\arxivfalse

\ifdraft
\usepackage[paperheight=11in,paperwidth=9.5in,
			left=1.25in,right=1.25in,
			top=0.75in,bottom=0.75in,
			heightrounded,marginparwidth=1.2in,
			marginparsep=0.05in]{geometry}
\usepackage{xcolor}
\usepackage{xargs} 
\usepackage[textsize=footnotesize]{todonotes}
\newcommandx{\sh}[2][1=]{\todo[linecolor=blue,
			backgroundcolor=blue!10,bordercolor=blue,#1]{Han: #2}}
\newcommandx{\tg}[2][1=]{\todo[linecolor=orange,
			backgroundcolor=orange!10,bordercolor=orange,#1]{Greaten: #2}}
\newcommandx{\jy}[2][1=]{\todo[linecolor=green,
			backgroundcolor=green!10,bordercolor=green,#1]{JJ: #2}}
\else
\newcommand{\sh}[1]{{}}
\newcommand{\tg}[1]{{}}
\newcommand{\jy}[1]{{}}
\fi

%%%%%%%%%%%%%%%%%%%%%%%%%%%%%%%%%%%%%%%%%%%%%%%%%%%%%%%%%%%%%%%%
% Spacing related commands.
%%%%%%%%%%%%%%%%%%%%%%%%%%%%%%%%%%%%%%%%%%%%%%%%%%%%%%%%%%%%%%%%
\newif\iftwocolumn
\twocolumntrue

%%%%%%%%%%%%%%%%%%%%%%%%%%%%%%%%%%%%%%%%%%%%%%%%%%%%%%%%%%%%%%%%
% Spacing related commands.
%%%%%%%%%%%%%%%%%%%%%%%%%%%%%%%%%%%%%%%%%%%%%%%%%%%%%%%%%%%%%%%%
% % Space between figure and caption
% \setlength{\abovecaptionskip}{1pt}
% \setlength{\belowcaptionskip}{1pt}

% % Space between text and figs
% \setlength{\dbltextfloatsep}{1pt plus .5pt minus .5pt}
% \setlength{\textfloatsep}{1pt plus .5pt minus .5pt}
% \setlength{\intextsep}{1pt plus .5pt minus .5pt}

% % Space between equations and text
% \setlength{\belowdisplayskip}{1pt} \setlength{\belowdisplayshortskip}{1pt}
% \setlength{\abovedisplayskip}{1pt} \setlength{\abovedisplayshortskip}{1pt}

% Paragraph formatting 
% \setlength{\parskip}{1.5pt}

%%%%%%%%%%%%%%%%%%%%%%%%%%%%%%%%%%%%%%%%%%%%%%%%%%%%%%%%%%%%%%%%
% Environments and Theorems
%%%%%%%%%%%%%%%%%%%%%%%%%%%%%%%%%%%%%%%%%%%%%%%%%%%%%%%%%%%%%%%%

\newtheorem{proposition}{Proposition}[section]

\newtheorem{theorem}{Theorem}[section]
\newtheorem*{definition}{Definition}
\newtheorem*{remark}{Remark}

%%%%%%%%%%%%%%%%%%%%%%%%%%%%%%%%%%%%%%%%%%%%%%%%%%%%%%%%%%%%%%%%
% Algorithm2e setup
%%%%%%%%%%%%%%%%%%%%%%%%%%%%%%%%%%%%%%%%%%%%%%%%%%%%%%%%%%%%%%%%
% \SetKwProg{Fn}{Function}{}{}
% \SetKwComment{Comment}{$\triangleright$\ }{}

%%%%%%%%%%%%%%%%%%%%%%%%%%%%%%%%%%%%%%%%%%%%%%%%%%%%%%%%%%%%%%%%
% Edits
%%%%%%%%%%%%%%%%%%%%%%%%%%%%%%%%%%%%%%%%%%%%%%%%%%%%%%%%%%%%%%%%

%%%%%%%%%%%%%%%%%%%%%%%%%%%%%%%%%%%%%%%%%%%%%%%%%%%%%%%%%%%%%%%%
% Adjusted indentation of subsubsections
%%%%%%%%%%%%%%%%%%%%%%%%%%%%%%%%%%%%%%%%%%%%%%%%%%%%%%%%%%%%%%%%
\makeatletter
\def\subsubsection{\@startsection{subsubsection}% name
                                 {3}% level
                                 {\z@ \hspace*{1mm}}% indent (formerly \parindent)
                                 {0ex plus 0.1ex minus 0.1ex}% before skip
                                 {0ex}% after skip
                                 {\normalfont\normalsize\itshape}}% style
\makeatother

%%%%%%%%%%%%%%%%%%%%%%%%%%%%%%%%%%%%%%%%%%%%%%%%%%%%%%%%%%%%%%%%
% Commands / Notation
%%%%%%%%%%%%%%%%%%%%%%%%%%%%%%%%%%%%%%%%%%%%%%%%%%%%%%%%%%%%%%%%

\newcommand{\mpp}{\textsc{MRPP}\xspace}

\newcommand{\unpp}{\texttt{UNPP}\xspace}

\newcommand{\mapd}{\textsc{MAPD}\xspace}

\newcommand{\per}{\textsc{PER}\xspace}

\newcommand{\wcs}{\textsc{WCS}\xspace}
\newcommand{\mwcs}{\textsc{MWCS}\xspace}
\newcommand{\lwcs}{\textsc{LWCS}\xspace}
 
\title{\LARGE \bf
% Well-Connected Vertex Sets: Formulations,   Algorithms, and Applications
Well-Connected Set and Its Application to Multi-Robot Path Planning
% Optimizing Well-Connected Infrastructures for Multi-Robot Planning: A Comprehensive Study and Applications
}

\author{Teng Guo   \qquad Jingjin Yu% <-this % stops a space
\thanks{G. Teng, and J. Yu are with the Department of 
Computer Science, Rutgers, the State University of New Jersey, Piscataway, NJ, USA. 
Emails: {\tt\small \{teng.guo, jingjin.yu\}@rutgers.edu}.
%
%This work is supported in part by NSF award IIS-1845888 and an Amazon Research Award. 
}
}

\begin{document}

\maketitle
\thispagestyle{empty}
\pagestyle{empty}

%%%%%%%%%%%%%%%%%%%%%%%%%%%%%%%%%%%%%%%%%%%%%%%%%%%%%%%%%%%%%%%%%%%%%%%%%%%%%%%%
\begin{abstract}
Parking lots and autonomous warehouses for accommodating many vehicles/robots adopt designs in which the underlying graphs are \emph{well-connected} to simplify planning and reduce congestion. 
In this study, we formulate and delve into the \emph{largest well-connected set} (LWCS) problem and explore its applications in layout design for multi-robot path planning.
Roughly speaking, a well-connected set over a connected graph is a set of vertices such that there is a path on the graph connecting any pair of vertices in the set without passing through any additional vertices of the set. 
Identifying an LWCS has many potential high-utility applications, e.g., for determining parking garage layout and capacity, as prioritized planning can be shown to be complete when start/goal configurations belong to an LWCS. 
In this work, we establish that computing an LWCS is NP-complete. We further develop optimal and near-optimal LWCS algorithms, with the near-optimal algorithm targeting large maps. 
A complete prioritized planning method is given for planning paths for multiple robots residing on an LWCS.
% 
%Our research offers a rigorous problem formulation for the \lwcs problem and establishes its NP-hardness. 
% 
%We propose an optimal algorithm and a near-optimal algorithm suitable for large-scale  maps. 
% 
%In addition, a complete prioritized planning method has been proposed for planning paths for multiple robots.

% Our experiments show that our algorithms outperform existing approaches in terms of both solution quality and computation time. We believe that our work provides valuable insights into the maximum well-path-connected vertex set problem and opens up new directions for future research.
\end{abstract}

%%%%%%%%%%%%%%%%%%%%%%%%%%%%%%%%%%%%%%%%%%%%%%%%%%%%%%%%%%%%%%%%%%%%%%%%%%%%%%%%
\section{Introduction}

Designing infrastructures that accommodate many mobile entities (e.g., vehicles, robots, and so on) without causing frequent congestion or deadlock is critical for improving system throughput in real-world applications, e.g., in an autonomous warehouse where many robots roam around. 
% 
%This challenge is particularly relevant in domains, including robotics, where optimizing connectivity can greatly enhance the performance of multi-robot systems. 
% 
A good design generally entails good \emph{environment connectivity} in some sense. 
This paper captures the intuition of ``good connectivity'' with the concept of \emph{well-connect set} (WCS), presents a comprehensive study on computing largest well-connected sets (LWCS), and highlights the application of LWCS in multi-robot path planning (MRPP).
% 
%Well-formed infrastructures play a crucial role in enhancing the efficiency of real-world systems, such as parking and storage facilities. 
% 

To illustrate what a WCS is, let's consider a parking garage. It is essential to design it so vehicles can park without blocking each other, and retrieving a parked vehicle doesn't require moving other vehicles. 
Roughly speaking, the parking spots satisfying these requirements form a WCS, and finding an LWCS is instrumental in determining maximum parking capacity while minimizing congestion.
Well-formed infrastructures based on WCSs are encountered in a broad array of real-world scenarios, including fulfillment warehouses, parking structures, storage systems~\cite{wurman2008coordinating,guo2023toward,azadeh2019robotized,caron2000optimal}, and so on. These infrastructures, designed properly, efficiently facilitate the movement of the enclosed entities, ensuring smooth operations and avoiding blockages.

WCS is especially relevant to MRPP, which involves finding collision-free paths for many mobile robots ~\cite{guo2022sub,yan2013survey,sheng2006distributed,Ma2017LifelongMP,ma2019searching,vcap2015complete}.
Here, the challenge lies in finding feasible paths connecting each robot's start and target positions. 
The concept of \wcs becomes crucial, as it ensures that each robot can reach its destination without traversing other robots' positions, thereby guaranteeing a deadlock-free solution for easily realized prioritized planners.

In this paper, we provide a rigorous formulation for the LWCS problem and establish its computational intractability. We then propose two algorithms for tackling this challenging combinatorial optimization problem. The first algorithm is an exact optimal approach that guarantees finding a largest well-connected vertex set, while the second algorithm offers a suboptimal but highly efficient solution for large-scale instances. As an application, LWCS readily provides prioritized MRPP with completeness guarantees.

\begin{figure}[t]
\vspace{1.5mm}
    \centering
  \begin{overpic}               
        [width=1\linewidth]{./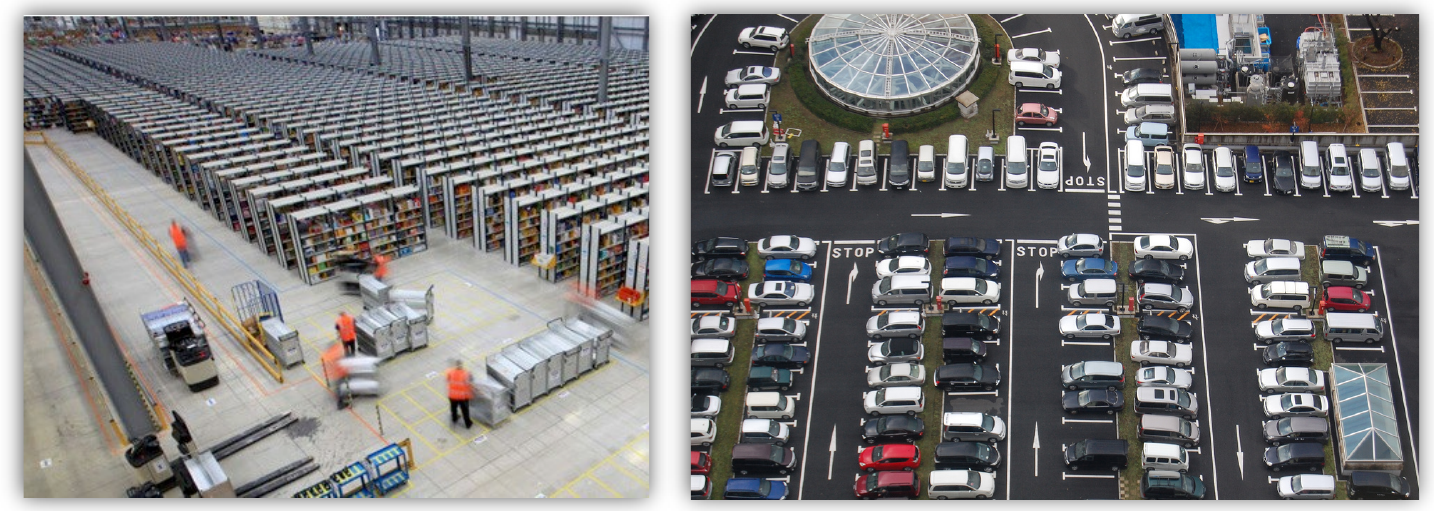}
             \small
             % \put(12.5, 36.5) {(a)}
             % \put(47.5,36.5) {(b)}
             % \put(82.5, 36.5) {(c)}
             \put(20.5, -3) {(a)}
             \put(70.5, -3) {(b)}
        \end{overpic}
\vspace{-3mm}
    \caption{Examples of well-formed infrastructures. (a) Amazon fulfillment warehouse. (b)  A typical parking lot.}
    \label{fig:aplications}
\vspace{-8.5mm}
\end{figure}% 

\textbf{Related work.} The concept of well-connected vertex sets is inspired by well-formed infrastructures \cite{vcap2015complete}. In well-formed infrastructures, such as parking lots and fulfillment warehouses \cite{wurman2008coordinating}, the endpoints are designed to allow multiple robots (vehicles) to move between them without completely obstructing each other, where the endpoint can be a parking slot, a pickup station, or a delivery station. 
Many real-world infrastructures are built in this way to benefit pathfinding and collision avoidance.
Planning collision-free paths that move robots from their start positions to target positions, known as multi-robot path planning or \mpp, is generally NP-hard to optimally solve ~\cite{surynek2010optimization,yu2013structure}.
In real applications, prioritized planning \cite{erdmann1987multiple,silver2005cooperative} is one of the most popular methods used to find collision-free paths for multiple moving robots where the robots are ordered into a sequence and planned one by one such that each robot avoids collisions with the higher-priority robots.
The method performs well in uncluttered settings but is generally incomplete and can fail due to deadlocks in dense environments.
Prior studies \cite{vcap2015complete,ma2019searching} show that prioritized planning with arbitrary ordering is guaranteed to find deadlock-free paths in well-formed environments.
When a problem is not well-formed, it may be possible to find a solution using prioritized planning with a specific priority ordering, as proposed in \cite{ma2019searching}. However, finding such a priority order can be time-consuming or even impossible. 
To our knowledge, no previous studies investigated how to efficiently design a well-formed layout that fully utilizes the workspace.
%We formalize it as a combinatorial search problem on graphs and propose efficient algorithms for solving the problem.
% 
% Alternatively, we propose an algorithm that converts a non-well-formed instance that is unsolvable by prioritized planning, based on \lwcs, to a well-formed instance. Our algorithm runs in polynomial time and is complete when the number of robots is less than half the size of the maximum well-connected vertex set of the map.

\begin{figure}[h]
    \centering
    
  \begin{overpic}               
        [width=1\linewidth]{./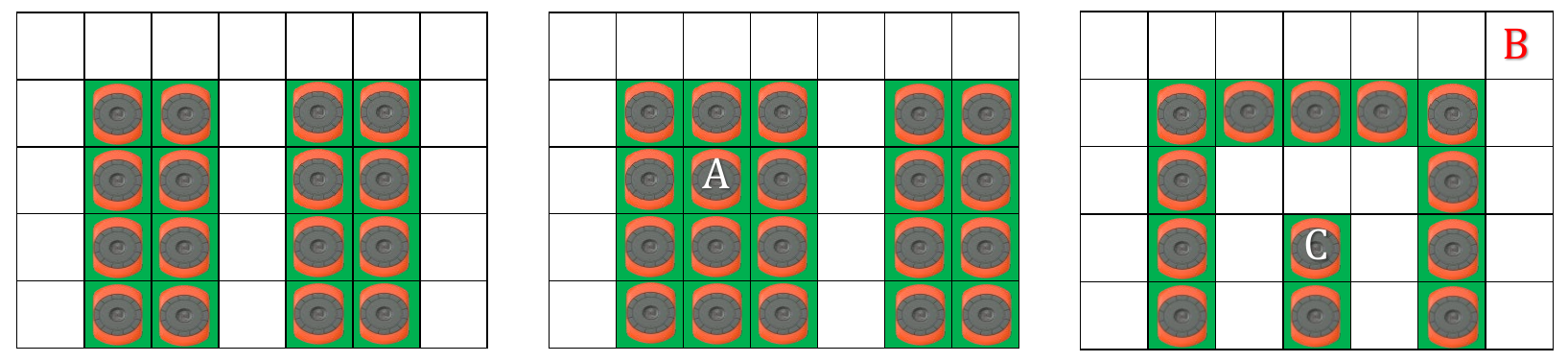}
             \small
             \put(14., -3) {(a)}
             \put(47.9,-3) {(b)}
             \put(82., -3) {(c)}
             % \put(20.5, -3) {(a)}
             % \put(75.5, -3) {(b)}
        \end{overpic}
\vspace{-2mm}
    \caption{(a) The green cells form a WCS. Any robot parked at one of these cells does not block others' move. (b)  An example of a non-WCS. Retrieving one robot, $A$, requires moving some other robots. (c) An example of a SWCS. $B$ has no access to a robot at $C$ without moving others.  }
    \label{fig:well_connected_example}
\vspace{-6mm}
\end{figure}% 

\textbf{Organization.} 
%This paper is organized as follows. 
% 
Section \ref{sec:prelim} provides detailed problem formulations. 
In Section \ref{sec:nphard}, we investigate the theoretical properties and establish the proof of NP-hardness. 
Next, in Section \ref{sec:algorithm}, we present our algorithms for finding \wcs while optimizing the size of the set. 
Section \ref{sec:applications} explores the application of \wcs in multi-robot navigation.
In Section \ref{sec:eval}, we evaluate the effectiveness of our algorithms on various maps. 
Finally, we conclude the paper in Section \ref{sec:conclusion} and discuss future directions for research.
% 
% In \cite{ma2019searching}, some theorems are provided about applying prioritized planning on \mpp.
% % 
% It is said that prioritized planning is incomplete in general cases.
% % 
% There exists a class of problems that are P-solvable which can be solved using a specific priority ordering.
% % 
% And there is also a class of well-formed problems that can be solved using any specific priority ordering.
% % 
% There are several open questions.
% % 
% First, it is unclear if there is an efficient way to know if a problem is P-solvable without searching on the permutation space.
% % 
% Second, since well-defined problems are guaranteed to be solved using prioritized planning, can we find a way to convert a problem into a well-formed problem?
%%%%%%%%%%%%%%%%%%%%%%%%%%%%%%%%%%%%%%%%%%%%%%%%%%%%%%%%%%%%%%%%%%%%%%%%%%%%%%%%%%%

\section{Problem Formulation}\label{sec:prelim}
\subsection{Well-Connected Set}
Let $G(V,E)$ be a connected undirected graph representing the environment with vertex set $V$ and edge set $E$ the edge set. 
A well-connected set (WCS) is defined as follows.
\begin{definition}[Well-Connected Set (WCS)]\label{def:well_connected_set}
On a graph $G(V,E)$, a vertex set $M \subset V$ is well-connected if (i). $\forall u,v\in M, u\neq v$, there exists a path connecting $u,v$ without passing through any $w\in M-\{u,v\}$ , (ii) the induced subgraph of $G$ by the vertex subset $V-M$ is connected.
\end{definition}
WCS enforces a stronger connectivity requirement. If a vertex set $M$ satisfies (i) but violates (ii), we call $M$ a \emph{semi-well-connected set} (SWCS). 
Any WCS is a SWCS set but the opposite is not true (see, e.g.,  Fig.~\ref{fig:well_connected_example}(c)).

For a given $G$, there are many WCSs. We are particularly interested in computing a largest such set.
% 
%This is crucial for increasing the number of available vertices as parking spots or task endpoints for robots or vehicles while avoiding blocking each other. 
% 
Toward that, We introduce two related concepts: the \emph{maximal well-connected set} and the \emph{largest well-connected set}.
% 
%These concepts provide a framework for identifying the largest possible subset of vertices that are well-connected, thus facilitating the efficient use of available resources.
% 

% 
\begin{definition}[Maximal Well-Connected Set (\mwcs)]
A \wcs $M$  is maximal if for any $v\in V-M$, $\{v\}\bigcup M$ is not a well-connected set.
\end{definition}

\begin{definition}[Largest Well-Connected Set (\lwcs)]
 A largest well-connected set $M$  is a WCS with maximum cardinality.    
\end{definition}

By definition, a LWCS is also a MWCS; the opposite is not necessarily true. 
% 
%As mentioned above, finding the maximal/\lwcs,  which is formulated as a combinatorial optimization problem, is helpful in figuring out the parking capacity of a well-formed facility, designing the well-formed layout, and multi-robot coordination.
% 
In this paper, we focus on maximizing the ``capacity" of well-formed infrastructures, or in other words, maximizing the cardinality of the \wcs.
Besides capacity, we also introduce the \emph{path efficiency ratio} (\per) for evaluating how good a layout is from the path-length perspective. 
\begin{definition}[Well-Connected Path (WCP)]
Let $M$ be a WCS. A path $p=(p_0,...,p_k)$ is a well-connected path connecting $p_0$ and $p_k$ if its subpath $(p_1,...,p_{k-1})$ does not pass through any vertex in $M$. 
\end{definition}
If $M$ is a WCS, a WCP connects any two vertices $u,v\in M$. We denote $d_w(u,v)$ as the shortest WCP distance between $u,v$ and $d(u,v)$ as the shortest path distance.
\begin{definition}[Path Efficiency Ratio]
Let $M$ be a WCS of $G$ and $u\in M$ as the reference point (i.e. I/O port), the \emph{path efficiency ratio} w.r.t vertex $u$ is defined as $\frac{\sum_{v\in M}d(u,v)}{\sum_{v\in M}d_w(u,v)}$.
\end{definition}

\section{Theoretic Study}\label{sec:nphard}
In this section, we investigate the property of WCS and prove that finding \lwcs is NP-hard.
% 
% \begin{proposition}\label{p:prop_well_set}
%     If $M$ is a \wcs, then $\forall u\in M$ and $\forall v\in V-M$, there exists a path connecting $u,v$ without passing through any other vertex $w\in M-\{u\}$. In another word, $\forall u\in M$, the subgraph of $G$ induced by $V-(M-\{u\})$ is connected.
% \end{proposition}
% 

A vertex in an undirected connected graph is an \emph{articulation point} (or cut vertex) if removing it (and edges through it) disconnects the graph or increases the number of connected components. 
We observe if a \wcs contains a node $v$, then $v$ should not be an articulation point so as not to violate property (ii) in the definition of \wcs.
\begin{proposition}\label{p:articulation_point}
    If $M$ is a \wcs, for any $M'\subseteq M,v\in M'$, $v$ is not an articulation point of the subgraph induced by $V-M'+\{v\}$.  
\end{proposition}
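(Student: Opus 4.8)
The plan is to prove the contrapositive-flavored statement directly: assume $M$ is a WCS, fix any $M' \subseteq M$ and any $v \in M'$, and show $v$ cannot be an articulation point of the graph $H := G[V - M' + \{v\}]$. Suppose, for contradiction, that $v$ is an articulation point of $H$. Then removing $v$ from $H$ splits $H - v = G[V - M']$ into at least two connected components; call two of them $C_1$ and $C_2$. The key structural observation I want to exploit is that $V - M'$ contains $V - M$ entirely (since $M' \subseteq M$), and by property (ii) of Definition~\ref{def:well_connected_set} the set $V - M$ induces a connected subgraph of $G$. Since $v \notin V - M$, the connected set $V - M$ must lie entirely inside one of the components of $H - v$, say $C_1$; hence $C_2$ is disjoint from $V - M$, which forces $C_2 \subseteq (M - M') \setminus \{v\} = M \setminus M'$ — i.e., $C_2$ consists only of vertices of $M$ other than those in $M'$.

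Next I would pick any vertex $w \in C_2$ (so $w \in M$, $w \neq v$, and $w \notin M'$) and derive a contradiction with property (i) of the WCS definition applied to the pair $u := v$ and $w$. Property (i) gives a path $p$ in $G$ from $v$ to $w$ that avoids every vertex of $M - \{v, w\}$. I claim this path actually lives inside $H$: every internal vertex of $p$ lies in $V \setminus (M - \{v,w\})$, and since $M' \subseteq M$ and the internal vertices are not $v$ or $w$ with $w \notin M'$, no internal vertex lies in $M'$ either; also the endpoints $v$ and $w$ are in $V(H)$. So $p$ is a path in $H$ from $v$ to $w$. But $w \in C_2$ and every $H$-path from $v$ into $C_2$ must pass through $v$ only at its start and then immediately enter... here is the crux: because $v$ is an articulation point separating $C_1$ from $C_2$, and $p$ starts at $v$, $p$ can still reach $w \in C_2$ — so I need to be slightly more careful. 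The real contradiction should instead come from choosing $w \in C_1 \cap (M \setminus M')$ if such a vertex exists, or more robustly, from the following: take $w \in C_2$ and a vertex $x \in V - M \subseteq C_1$; the WCP between $w$ and $x$ guaranteed by property (i) (both are in $M$? no, $x \notin M$) — let me instead use two vertices of $M$ on opposite sides.

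The clean argument: since $C_2 \subseteq M \setminus M'$ is nonempty, pick $w \in C_2$. Pick any vertex $x$ in $V - M$ (nonempty because $M \subsetneq V$); then $x \in C_1$. The WCS property (i) does not directly apply to the pair $(w,x)$ since $x\notin M$, so instead I apply (i) to $w$ and $v$: there is a path $p$ from $w$ to $v$ in $G$ avoiding all of $M - \{v,w\}$, hence (as argued) a path in $H$; fine, that just reaches $v$. The decisive step: apply property (i) to the pair $w$ and \emph{some other vertex $w' \in M$ lying in $C_1$}. If $M \setminus M'$ has a representative in $C_1$, the WCP from $w$ to $w'$ avoids $M'\setminus\{v\}$ wait it avoids $M-\{w,w'\}\supseteq M'$... and avoids $v$, so it is a $v$-free path in $H$ from $C_2$ to $C_1$ — contradicting that $v$ separates them. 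So the main obstacle is the bookkeeping to guarantee a pair of $M$-vertices, or an $M$-vertex and the connected block $V-M$, sitting on opposite sides of $v$; I expect to handle it by noting $C_2 \subseteq M\setminus M'$ and $(V-M)\cup\{\text{the rest}\}\subseteq C_1$, then applying property (i) to $w\in C_2$ against \emph{any} $u\in M'\setminus\{v\}$ or, if $M'=\{v\}$, directly against a vertex of $V-M$ via a short detour argument — the edge case $M' = \{v\}$ and the WCP possibly re-entering $v$ are the fiddly points to nail down carefully.
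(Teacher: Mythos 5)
You correctly located the crux, but the issue you deferred at the end (``the fiddly points to nail down carefully'') is not bookkeeping --- it is fatal, because the proposition as stated is false, so no completion of your argument exists. Concretely, let $G$ be the $4$-cycle on vertices $w,a,x,b$ with edges $wa, ax, xb, bw$, and let $M=\{w,a,b\}$. Then $M$ is a well-connected set: $w$ is adjacent to both $a$ and $b$, the pair $a,b$ is joined by the path $a$--$x$--$b$ which avoids $w$, and the subgraph induced by $V-M=\{x\}$ is connected. Now take $M'=\{a,b\}$ and $v=a$: the subgraph induced by $(V-M')\cup\{a\}=\{w,a,x\}$ is the path $w$--$a$--$x$, and $a$ \emph{is} an articulation point of it. In your notation $C_2=\{w\}\subseteq M\setminus M'$, and the only tool property (i) provides for $w$ --- a well-connected path to some other vertex of $M$ --- degenerates to the single edges $wa$ and $wb$, whose interiors are empty; they contribute nothing inside $G[V-M']$, so there is no way to route from $C_2$ to $C_1$ while avoiding $v$. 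Your earlier steps are sound (placing the connected block $V-M$ inside one component $C_1$ of $H-v$, hence $C_2\subseteq M\setminus M'$); it is exactly the case you postponed --- all well-connected paths out of $w$ ending immediately at vertices of $M'$ --- that kills the claim.

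For comparison, the paper gives no proof of this proposition at all; it is asserted on the strength of the remark that a WCS vertex must not violate property (ii). That reasoning only covers the situation in which $G[V-M']$ is connected (in particular $M'=M$, or $M'$ itself a WCS), where the claim is immediate: either $v$ has a neighbor in $V-M'$, in which case both $G[(V-M')\cup\{v\}]$ and $G[V-M']$ are connected, or it has none, in which case deleting $v$ cannot increase the number of components. For a general subset $M'\subseteq M$, the connectivity of $G[V-M']$ can fail (in the example above $G[\{w,x\}]$ is already disconnected), and with it the proposition. So the right response to your stuck point is not a harder case analysis but a repaired statement --- e.g., add the hypothesis that $G[V-M']$ is connected, or restrict to $M'=M$ --- under which your argument (or the two-line argument just given) goes through.
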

Next, we investigate the property of the node in \wcs and its neighbors. 
\begin{proposition}\label{p:orphan_neighbor}
Let $M$ be a \wcs. For any $v\in M$, if $|M|>|N(v)|+1$ where $N(v)$ denotes the set of neighbors of $v$, then at least one of its neighbors is not in $M$.
\end{proposition}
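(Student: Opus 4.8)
The plan is to argue by contradiction. Suppose $M$ is a WCS, $v \in M$, $|M| > |N(v)| + 1$, and yet every neighbor of $v$ lies in $M$, i.e. $N(v) \subseteq M$. I would first count: the set $M - \{v\}$ has at least $|N(v)| + 1$ elements by the size hypothesis, and $N(v)$ accounts for at most $|N(v)|$ of them, so there exists some $u \in M$, $u \neq v$, with $u \notin N(v)$. The idea is to derive a violation of property (i) of Definition~\ref{def:well_connected_set} for this pair $u$ and $v$.

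The key observation is that any path starting at $v$ must leave $v$ through one of its neighbors, and by assumption all of those neighbors belong to $M$. Hence, for the particular vertex $u$ chosen above, consider any path $p = (p_0, \dots, p_k)$ with $p_0 = v$ and $p_k = u$. Since $u \notin N(v)$, we have $k \geq 2$, so the internal vertices $(p_1, \dots, p_{k-1})$ are nonempty, and $p_1 \in N(v) \subseteq M$. But $p_1 \neq v$ (it's a neighbor of $v$) and $p_1 \neq u$ (since $u \notin N(v)$, while $p_1 \in N(v)$), so $p_1 \in M - \{u, v\}$. This means every path from $v$ to $u$ passes through a vertex of $M - \{u, v\}$, contradicting property (i). Therefore the assumption $N(v) \subseteq M$ is false, and at least one neighbor of $v$ lies outside $M$.

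I do not anticipate a serious obstacle here — the argument is essentially a pigeonhole count followed by the trivial observation that a path out of $v$ must use an edge at $v$. The one point requiring a small amount of care is confirming $p_1 \neq u$, which uses precisely the fact that the chosen $u$ is a \emph{non-neighbor} of $v$; this is why the counting step (guaranteeing the existence of such a $u$, rather than just any $u \in M - \{v\}$) is doing real work and why the hypothesis $|M| > |N(v)| + 1$ rather than merely $|M| \geq 2$ is needed. It is also worth noting that property (ii) and Proposition~\ref{p:articulation_point} are not needed for this argument; only property (i) is invoked.
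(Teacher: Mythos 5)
Your proof is correct and follows essentially the same route as the paper's: assume $N(v)\subseteq M$, use $|M|>|N(v)|+1$ to extract a vertex of $M$ outside $N(v)\cup\{v\}$, and observe that every path between it and $v$ must pass through a neighbor of $v$ lying in $M-\{u,v\}$, contradicting property (i). Your write-up merely spells out the paper's final step (checking $p_1\neq u$ and $p_1\neq v$) in more detail.
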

\begin{proof}
    Assume $N(v)\subset M$. Because $|M|>|N(v)|+1$, $M-(N(v)\bigcup\{v\})\neq\emptyset$. Let $w\in M-(N(v)\bigcup\{v\})$, then every path from $w$ to $v$ has to pass through a neighboring node of $v$, which contradicts property (i) of \wcs.
\end{proof}
We now prove that finding a \lwcs is NP-hard.
\begin{theorem}[Intractability]
    Finding the \lwcs is NP-hard.
\end{theorem}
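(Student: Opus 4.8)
The plan is to exhibit a polynomial-time reduction from a known NP-hard problem to the decision version of LWCS (``does $G$ admit a well-connected set of size at least $k$?''). The natural candidate is \textsc{Independent Set}, since property (i) of a WCS has the flavor of ``no two chosen vertices block each other,'' which becomes a non-adjacency-type condition once we control the surrounding structure. Given an instance $(H, k)$ of \textsc{Independent Set} on a graph $H$ with $n$ vertices, I would build a graph $G$ as follows: create one ``selector'' vertex $x_v$ for each $v \in V(H)$, and attach to each $x_v$ a private pendant gadget (e.g., a small path or a leaf attached through an intermediate vertex) whose role is purely to make $x_v$ eligible for membership without forcing global obstructions. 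Then connect the selector vertices so that $x_u$ and $x_v$ are forced to conflict in property (i) exactly when $uv \in E(H)$ — the cleanest way is to route all selectors through a common ``hub'' region, or to insert, for each edge $uv$, a short connector so that the only short detours between $x_u$ and $x_v$ pass through each other's territory. The construction must be arranged so that (ii) (the complement $V-M$ stays connected) is automatically satisfied by any candidate $M$ of the relevant form, which the pendant gadgets and hub handle.

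The key steps, in order, are: (1) define $G$ precisely and argue it is computable in polynomial time; (2) show completeness — if $H$ has an independent set $S$ with $|S|=k$, then $\{x_v : v \in S\}$ (possibly together with a fixed set of ``free'' gadget vertices that are always includable) is a WCS of the target size, verifying both (i) and (ii) directly using independence of $S$; (3) show soundness — given a WCS $M$ in $G$ of size at least the target, use Proposition~\ref{p:articulation_point} and Proposition~\ref{p:orphan_neighbor} to argue that $M$ cannot contain certain gadget/hub vertices (they are articulation points, or they strangle their neighbors), so $M$ must consist of selector vertices plus a controlled number of always-free vertices; then show that the selectors in $M$ project to an independent set in $H$, because two adjacent selectors $x_u, x_v$ would violate property (i) by the edge-connector construction. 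Steps (2) and (3) together give the equivalence, and combined with membership of the decision problem in NP (a WCS is a polynomial-size certificate checkable in polynomial time via connectivity queries), NP-hardness — indeed NP-completeness — follows.

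The main obstacle I expect is engineering the edge gadget so that it is simultaneously a genuine obstruction and a harmless one: it must guarantee that picking both endpoints of an $H$-edge into $M$ violates (i), yet it must not itself create spurious articulation points that would break (ii) for legitimate solutions, and it must not accidentally let a clever $M$ circumvent the intended conflict by routing a WCP around the gadget through some other part of $G$. Getting the global routing right — ensuring that for non-adjacent $u,v$ there genuinely is a WCP between $x_u$ and $x_v$ avoiding all other selected vertices, while for adjacent $u,v$ every path is blocked — is the delicate part, and I would likely make the gadgets ``locally tree-like'' (so detours are forced) while keeping one robustly redundant backbone through the non-selector vertices to preserve (ii). A secondary subtlety is bookkeeping the ``always-free'' gadget vertices so that the target size $k'$ in the LWCS instance corresponds exactly to $k$ in the \textsc{Independent Set} instance.
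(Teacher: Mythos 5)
There is a genuine gap: the heart of any such reduction is the edge gadget, and your proposal leaves it unconstructed — you yourself flag it as ``the main obstacle.'' Worse, the mechanism you sketch is in tension with the definition of a WCS. Property (i) only requires \emph{some} path between two members of $M$ avoiding the other members; it does not care about short detours or ``territory.'' So if, as you propose, you keep a ``robustly redundant backbone through the non-selector vertices to preserve (ii),'' that very backbone hands every pair of selectors a valid well-connected path, and adjacency in $H$ can never be made to violate (i) \emph{between the two selectors themselves}. Conversely, if you destroy the backbone enough to block adjacent selectors from each other, you will typically also block non-adjacent pairs or disconnect $V-M$. In other words, conflicts in a WCS are not naturally a pairwise non-adjacency relation on the chosen vertices; they arise when some \emph{third} member of $M$ gets surrounded (all of its escape routes pass through other members). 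Any working reduction has to exploit that three-way structure, plus a counting argument forcing the ``surroundable'' vertices into $M$ — and your sketch provides neither the gadget nor the counting.

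For comparison, the paper's proof reduces from 3SAT rather than Independent Set, and its gadgets are built exactly around the ``surrounded third vertex'' phenomenon: each variable $x_i$ becomes a triangle $(x_i,\bar{x}_i,y_i)$, where selecting both literal nodes strangles (or isolates) $y_i$, so at most one literal per variable can join $M$ and the budget $2n+m$ forces each $y_i$ in; each clause becomes a node adjacent to the three \emph{negated} literal nodes, so that clause node can sit in $M$ only if at least one of those negations is excluded, i.e., the clause is satisfied; an auxiliary node $z$ adjacent to all literal nodes keeps the complement connected. If you want to salvage an Independent Set route, you would need an analogous device — e.g., a degree-two vertex $w_{uv}$ adjacent to $x_u,x_v$ for each edge $uv\in E(H)$, with a size budget and structural argument forcing every $w_{uv}$ into $M$ — and proving that forcing step is precisely the nontrivial work your proposal postpones.
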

\begin{proof}
    We give the proof by using a reduction from  \emph{3SAT} \cite{garey1979computers}. Let $(X,C)$ be an arbitrary instance of 3SAT with $|X|=n$ variables $x_1,...,x_n$ and $|C|=m$ clauses $c_1,...,c_m$, in which $c_j=l_j^1\vee l_j^2\vee l_j^3$.
    Without loss of generality, we may assume that the set of all literals, $l^k_j$'s, contain both unnegated and negated forms of each variable $x_i$.

    From the 3SAT instance, a \lwcs instance is constructed as follows. 
    For each variable $x_i$, we create three nodes, one node is for $x_i$, one node is for its negation $\Bar{x}_i$, and $y_i$ which is a gadget node. 
    The three nodes are connected with each other with edges and form a triangle gadget.
    For each clause variable $c_i=l_j^1\vee l_j^2\vee l_j^3$, we create a node and connect it to the three nodes that are associated with $\Bar{l}_j^1,\Bar{l}_j^2,\Bar{l}_j^3$.
    Finally, we create an auxiliary node $z$ and add an edge between $z$ and each literal node. 
    Fig.~\ref{fig:np_hardness} gives the complete graph constructed from the 3CNF formula $(x_1\vee x_2\vee x_3)\wedge(x_2\vee \Bar{x}_3\vee \Bar{x}_4)
    \wedge (\Bar{x}_1\vee x_3\vee x_4)\wedge(x_1\vee\Bar{x}_2\vee\Bar{x}_4)$.

\begin{figure}[h]
    \centering
  \begin{overpic}               
        [width=0.75\linewidth]{./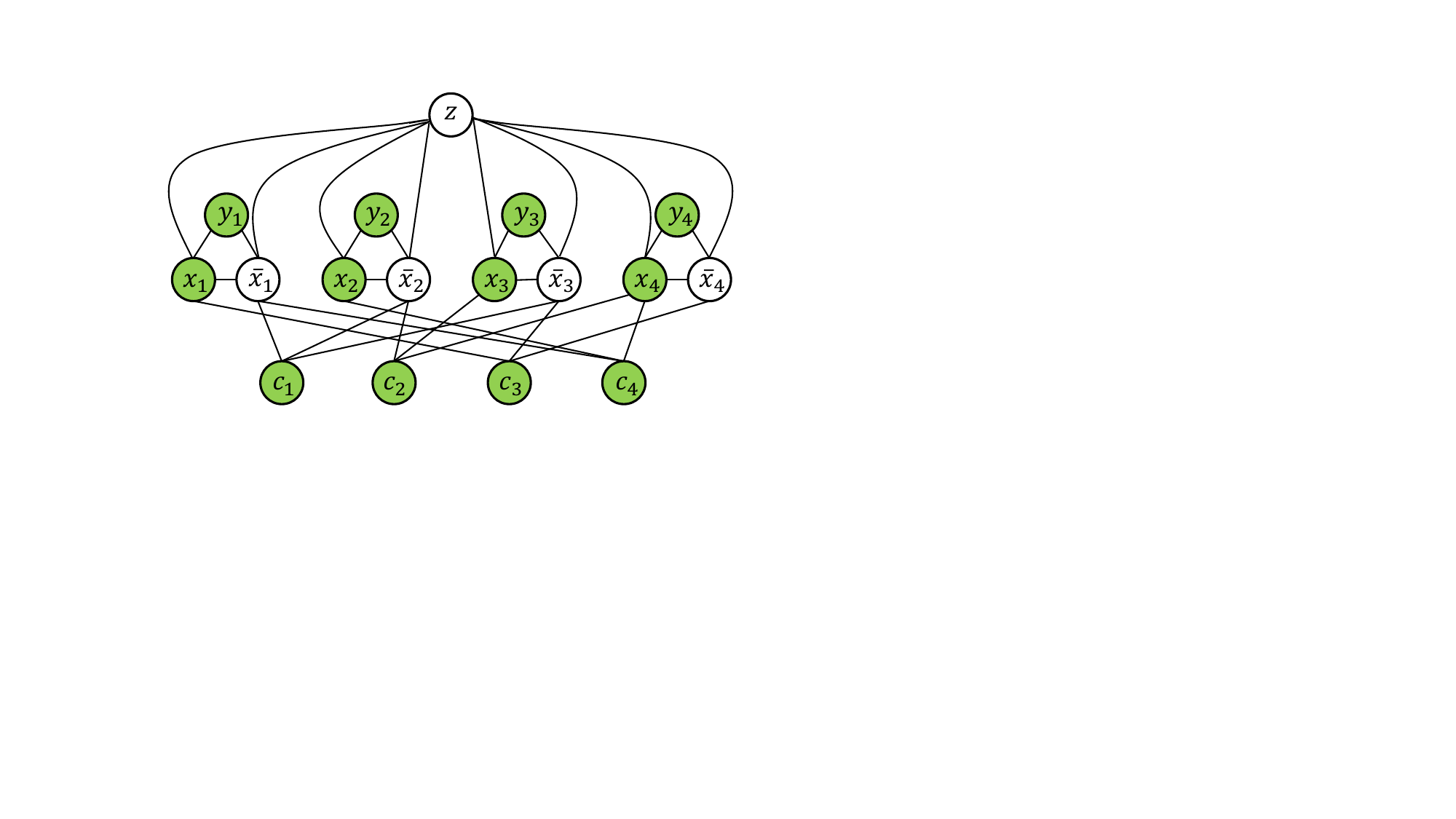}
             \small
             % \put(12.5, 36.5) {(a)}
             % \put(47.5,36.5) {(b)}
             % \put(82.5, 36.5) {(c)}
        \end{overpic}
%\vspace{-3mm}
    \caption{The graph derived from the 3SAT instance $(x_1\vee x_2\vee x_3)\wedge(x_2\vee \Bar{x}_3\vee \Bar{x}_4)
    \wedge (\Bar{x}_1\vee x_3\vee x_4)\wedge(x_1\vee\Bar{x}_2\vee\Bar{x}_4)$. The green vertices form a \lwcs of size 12. By setting the literals of green vertices to true, the 3SAT instance is satisfied.}
    \label{fig:np_hardness}
  \vspace{-7mm}
\end{figure}% 

To find a \lwcs, we observe that for each triangle gadget formed by $(x_i,\Bar{x}_i, y_i)$,  the node $x_i, \Bar{x}_i$ should not be selected at the same time.
    Otherwise, if node $y_i$ is not selected, it would be completely isolated, which violates Proposition \ref{p:articulation_point}. 
    If $y_i$ is also selected, then every path to $y_i$ from other nodes must always pass through either node $x_i$ or node $\Bar{x}_i$, violating \wcs definition.
    This implies that for each triangle, at most two nodes can be selected and added to the set, and one of the nodes must be $y_i$.
    A second observation is if the clause node $c_j=l_j^1\vee l_j^2\vee l_j^3$ is selected as a vertex of the \wcs, the nodes $\{\Bar{l}_j^1,\Bar{l}_j^2,\Bar{l}_j^3\}$ connected to $c_j$ cannot be selected simultaneously.
    Otherwise, node $c_i$ would be blocked by the three nodes. 
    
    If the 3SAT instance is satisfiable, let $\widetilde{X}=\{\widetilde{x}_1,...,\widetilde{x}_n\}$ be an assignment of the truth values to the variables. 
    Based on these observations, the \mwcs of size $2n+m$ is constructed as $\widetilde{X}\bigcup C\bigcup Y$, where $Y=\{y_1,...,y_n\}$.
    On the other hand, if the reduced graph has a \wcs of size $2n+m$, for each triangle gadget, we can only choose two, and one of them must be the gadget node.
    Thus, all the gadget nodes should be selected and this contributes $2n$ nodes.
    For the remaining $m$ nodes, we hope that all nodes in $C$ are selected.
    To not violate the well-connectedness, for each clause node $c_j=l_j^1\vee l_j^2\vee l_j^3$ to be selected, at least one of the node from  $(\Bar{l}_j^1,\Bar{l}_j^2,\Bar{l}_j^3)$ should not be selected. This is equivalent to ensuring that $c_j=l_j^1\vee l_j^2\vee l_j^3$ is true. Therefore, if the node $\widetilde{x}_i$ is selected for each $i$, we set $\widetilde{x}_i=true$, then the resulting assignment $\widetilde{X}$ satisfies the 3SAT instance.
\end{proof}

\ifarxiv
\begin{remark}
Although finding a \lwcs is generally NP-hard, for certain special classes of graphs, a \lwcs can be obtained easily.
For example, in a complete graph, the \lwcs is the set of all nodes in the graph. 
In a tree graph, the \lwcs is the set that contains all the nodes of degree one. 
In a complete bipartite graph $B(U,V)$, the \lwcs is formed by choosing any $|U|-1$ vertices from $U$ and any $|V|-1$ vertices from $V$.
Finding the maximum well-semi-connected set is also NP-hard, which can be proven by reduction from $3SAT$ in a similar manner to the proof for the \lwcs.
\end{remark}
\fi
Next, we investigate the upper bound for the \lwcs denoted as $M^*$.
%not tight 
\begin{proposition}\label{p:upperbound}
    Denote $\Delta$ as the maximum degree of the node of $G$. We have $|M^*|\leq \max(\frac{\Delta-1}{\Delta}|V|,\Delta+1)$.
\end{proposition}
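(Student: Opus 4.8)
The plan is to split on the size of $M^*$ and, in the non-trivial regime, run a double-counting argument on the edges crossing the cut $(M^*,\,V - M^*)$. If $|M^*| \le \Delta+1$ there is nothing to prove, since the right-hand side already contains the term $\Delta+1$. So assume henceforth that $|M^*| > \Delta + 1$; the goal is then to establish the sharper inequality $|M^*| \le \frac{\Delta-1}{\Delta}|V|$, which is algebraically equivalent to $|M^*| \le (\Delta-1)\,|V - M^*|$.

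Before counting edges I would first argue that $|V - M^*| \ge 2$. If $V - M^* = \emptyset$, then property (i) of Definition~\ref{def:well_connected_set} forces every pair of vertices of $V = M^*$ to be joined by an edge, so $G$ is complete and $|M^*| = |V| = \Delta + 1$, contradicting $|M^*| > \Delta+1$. If $|V - M^*| = 1$, say $V - M^* = \{w\}$, then since $|M^*| > \Delta + 1 \ge |N(v)| + 1$ for every $v$, Proposition~\ref{p:orphan_neighbor} tells us each $v \in M^*$ has a neighbor outside $M^*$, i.e.\ is adjacent to $w$; hence $\deg(w) \ge |M^*| > \Delta$, which is impossible.

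The heart of the proof is then a two-sided count of the number of edges $e(M^*, V-M^*)$ between $M^*$ and its complement. For the lower bound, Proposition~\ref{p:orphan_neighbor} (applicable because $|M^*| > \Delta+1 \ge |N(v)|+1$) gives that every $v \in M^*$ has at least one neighbor in $V - M^*$, so $e(M^*, V-M^*) \ge |M^*|$. For the upper bound, property (ii) says the subgraph induced on $V - M^*$ is connected; since $|V - M^*| \ge 2$ this subgraph has no isolated vertex, so each $w \in V - M^*$ has at least one neighbor inside $V - M^*$ and therefore at most $\Delta - 1$ neighbors in $M^*$, whence $e(M^*, V-M^*) \le (\Delta-1)|V - M^*|$. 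Chaining the two estimates yields $|M^*| \le (\Delta-1)(|V| - |M^*|)$, i.e.\ $\Delta|M^*| \le (\Delta-1)|V|$, completing the case and hence the proposition.

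The only delicate point — and what I would flag as the main obstacle — is ensuring that the hypotheses of Proposition~\ref{p:orphan_neighbor} and of property (ii) are genuinely in force, which comes down to disposing cleanly of the degenerate cases $|V - M^*| \in \{0,1\}$ (where ``the induced subgraph on $V-M^*$ is connected'' is vacuous or trivial and gives no usable edge on the complement side). Once those cases are excluded by the argument above, the remainder is routine edge counting; I do not expect any hidden difficulty there.
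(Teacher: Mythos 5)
Your proof is correct and takes essentially the same route as the paper's: the paper charges each $v\in M^*$ to a neighbor in $V-M^*$ and notes each such neighbor can absorb at most $\Delta-1$ charges, which is exactly your two-sided count of the edges across the cut, with the $\Delta+1$ term of the max covering the case $M^*=N(v)\cup\{v\}$. Your write-up is in fact more careful than the paper's, which leaves implicit both the appeal to Proposition~\ref{p:orphan_neighbor} (hence the need for the regime $|M^*|>\Delta+1$) and the use of property (ii) to exclude $|V-M^*|\in\{0,1\}$ and justify the $\Delta-1$ cap; you spell these out explicitly.
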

\begin{proof}
For each $v\in M^*$, at least one of its neighbors $u$ should be in $V-M^*$.
Charge $v$ to $u$.
A node $u\in V-M^*$ can be charged at most $\Delta-1$ times.
Hence, we have $|V-M^*|\geq \frac{|M^*|}{\Delta-1}$.
Since every node is either in $M^*$ or $V -M^*$ we have $|M^*| + |V - M^*| = |V|\geq (1+\frac{1}{\Delta-1})|M^*|$.
On the other hand, it is possible that $M^*=N(v)\bigcup\{v\}$ for a node $v$ of degree $\Delta$.
Therefore, $|M^*|\leq \max(\frac{\Delta-1}{\Delta}|V|,\Delta+1)$.
\end{proof}

\section{Algorithms for finding well-connected sets}\label{sec:algorithm}

\subsection{Algorithm for Finding a \mwcs}
We first present an algorithm to find a \mwcs in a graph, which begins by initializing two sets: $M$ and $P$. 
Set $M$ contains vertices currently included in the \wcs, while $P$ contains those available for adding to the set. 
Initially, $M$ is empty, and $P$ contains all the vertices of $G$.
The algorithm selects a vertex from $P$ and adds it to $M$ while maintaining it as a \wcs. 
At each step, we use Tarjan's algorithm~\cite{tarjan1972depth} to compute the set of articulation vertices for the subgraph induced by $V-M$ and remove all the articulation vertices from $P$. 
Assume $v\in M$ and $u\in N(v)$,  $u$ is called an orphan neighbor of $v$ if $u\not\in M$ and $N(v)-u\subset M$.
We iterate over all the neighboring vertices of $M$ to remove all orphan neighbors from $P$, according to Proposition~\ref{p:orphan_neighbor}. 
If $|P|=0$, the algorithm cannot add another vertex to $M$ while keeping it well-connected. 
To ensure that $M$ is indeed maximal, we check if a $v\in M$ exists, such as $M\subset N(v)\bigcup\{v\}$.
If so, we add the remaining neighbor of $v$ to $M$ so that $M$ is maximal.
Finding the set of articulation vertices using Tarjan's algorithm takes $O(|V|+|E|)$.
As a result, the algorithm for finding a \mwcs takes $O(|M|(|V|+|E|))=O(|V|(|V|+|E|))$.
\vspace{-3mm}

\begin{algorithm}
\begin{small}
\DontPrintSemicolon
\SetKwProg{Fn}{Function}{:}{}
\SetKw{Continue}{continue}
  % \KwIn{Starts $X_S$, goals $X_G$, preferred density $\rho^{*}$}
  \Fn{\textsc{MaximalWVS}({G})}{
 \caption{Maximal Well-Connected Set \label{alg:greedy}}
    $M\leftarrow \{\}, P\leftarrow V$\;
    \While{$|P|!=0$}{
        $AP\leftarrow \texttt{Tarjan}(G'(V-M))$\;
        $NB\leftarrow \texttt{FindOrphanNeighbors}(M)$\;
        $P\leftarrow P-(AP\bigcup NB)$\;
        $u\leftarrow \texttt{ChooseOne}(P)$\;
        $M.add(u)$ and $P.pop(u)$\;
    }
    $M\leftarrow \texttt{AdditionalCheck}(M)$\;
    \Return $M$\; 
}
\end{small}
\end{algorithm}
\vspace{-3mm}

% 
% Also not tight
Next, we establish lower bounds for the \mwcs algorithm.
\begin{proposition}\label{p:lowerbound_one_degree}
Denote $W$ as the set of terminal nodes that contains nodes of degree one. Then $W\subseteq M$.    
\end{proposition}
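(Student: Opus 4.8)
The plan is to show that the greedy \mwcs algorithm never removes a degree-one vertex from $P$ and always manages to include every such vertex, so that $W\subseteq M$ upon termination. The key observation is that a vertex $w$ of degree one is never an articulation point of any induced subgraph that contains it: removing a leaf does not disconnect its host component, and it trivially cannot separate the component into two nontrivial pieces. So $w$ is never placed in the set $AP$ computed by Tarjan's algorithm on $G'(V-M)$ in any iteration.

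Next I would argue that $w$ is never an orphan neighbor that gets removed. By Proposition~\ref{p:orphan_neighbor}'s companion notion used in the algorithm, a vertex $u\notin M$ is removed as an orphan neighbor of some $v\in M$ only when $N(v)-\{u\}\subseteq M$, i.e., $u$ is the last neighbor of $v$ outside $M$. For this to drop $w$, we would need $w\in N(v)$ for some $v\in M$ with all other neighbors of $v$ already in $M$. But then I claim $v$ could not have been legally added to $M$ in the first place without the algorithm's \texttt{AdditionalCheck} step reclaiming $w$: once $v\in M$ and every neighbor of $v$ except $w$ is in $M$, the set $M$ satisfies $M\subseteq N(v)\cup\{v\}$ only in the degenerate case, and more importantly any path from $w$ to a vertex of $M-\{v,w\}$ must pass through $v$ (since $w$'s only neighbor is $v$), which already forces $w$ to be added so that property~(i) is not violated — precisely what \texttt{AdditionalCheck} enforces. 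I would make this rigorous by a short case analysis: either $v$ is never added while $w\notin M$ (because doing so would immediately make $M$ non-well-connected, as $w$ would be an isolated-from-$M$ dead end reachable only through the $M$-vertex $v$), or $w$ is added at the latest in \texttt{AdditionalCheck}.

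The cleanest route, which I would actually adopt, is to prove the invariant directly: at the start of every iteration of the while loop, $W\setminus M\subseteq P$, and moreover no degree-one vertex is ever an obstruction to adding its neighbor. Since $P$ is initialized to $V\supseteq W$ and each iteration only removes articulation points and orphan neighbors — neither of which can be a degree-one vertex by the two observations above — the invariant is maintained; the loop exits only when $P=\emptyset$, at which point $W\setminus M\subseteq P=\emptyset$, hence $W\subseteq M$. The \texttt{AdditionalCheck} step only adds vertices, so it cannot destroy the conclusion. The main obstacle is the orphan-neighbor case: I must confirm that a degree-one vertex $w$ with its unique neighbor $v$ can never satisfy the orphan-neighbor removal condition in a way that precedes $w$'s own inclusion — i.e., that whenever $v$ enters $M$, well-connectedness of the running set $M$ already guarantees (via the algorithm's maintenance of property~(i), ultimately \texttt{AdditionalCheck}) that $w$ is pulled in rather than discarded. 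Handling this correctly requires being precise about the order in which \texttt{ChooseOne} could pick $v$ versus $w$ and about what \texttt{AdditionalCheck} does, but it is a finite bookkeeping argument rather than anything deep.
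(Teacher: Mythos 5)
Your proposal is correct and follows essentially the same route as the paper: argue that a degree-one vertex can never be removed from the candidate set $P$ --- it is never an articulation point of any induced subgraph containing it, and the orphan-neighbor removal cannot fire against it because its unique neighbor could not have legally entered $M$ first (that neighbor is itself an articulation point while the leaf is outside $M$, with the degenerate cases absorbed by \texttt{AdditionalCheck}) --- so every terminal node survives in $P$ until it is added to $M$. The paper's proof is just a one-line assertion of these two facts; your write-up supplies the case analysis behind the orphan-neighbor half, but the underlying argument is the same.
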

\begin{proof}
    Every terminal node $u$  is neither an orphan neighbor nor an articulation point of the induced subgraph $G'(V-M+\{u\})$. Thus $W\subseteq M$. 
\end{proof}
\begin{proposition}\label{p:lowerbound}\tg{Not very tight, it can be put into the arxiv version}
Denote $L$ as the length of the longest induced path of $G$. Then $|M|\geq \frac{|V|}{L}$.    
\end{proposition}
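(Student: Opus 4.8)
The plan is to prove the equivalent inequality $|V|\le L\,|M|$ by a charging argument against the vertices of $M$. First dispose of the degenerate cases: if $M=V$ then $|M|=|V|\ge |V|/L$ since $L\ge 1$, and the case $V=\emptyset$ is vacuous, so assume $\emptyset\neq M\subsetneq V$. For every $v\in V\setminus M$ fix a vertex $\mu(v)\in M$ at minimum distance from $v$ in $G$ (ties broken arbitrarily), and for $m\in M$ let $R_m:=\mu^{-1}(m)$. The sets $\{R_m\}_{m\in M}$ partition $V\setminus M$, so it suffices to prove $|V\setminus M|=\sum_{m\in M}|R_m|\le (L-1)\,|M|$; combined with $|V|=|M|+|V\setminus M|$ this gives the claim.

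The key geometric ingredient is that distances to $M$ are witnessed by induced paths. Fix $v\in R_m$ and a shortest $v$--$m$ path $P_v$ in $G$. Since $m$ is a closest vertex of $M$ to $v$, no internal vertex of $P_v$ lies in $M$; and $P_v$ is chordless, because a chord would yield a strictly shorter $v$-to-$M$ path, contradicting the choice of $m$. Hence $P_v$ is an induced path of $G$, so it has at most $L$ edges and $d(v,m)\le L$. Moreover every suffix of $P_v$ is itself a shortest path to $M$, so all vertices of $P_v$ other than $m$ again lie in $R_m$; choosing the $P_v$'s consistently therefore organizes $R_m\cup\{m\}$ into a tree $T_m$ rooted at $m$, whose edges are edges of $G$ and whose depth is at most $L$. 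In particular, a longest root-to-leaf branch of $T_m$ is an induced path on at most $L$ vertices sitting inside $R_m\cup\{m\}$.

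The remaining step --- and the main obstacle --- is to convert this into the count $|V\setminus M|\le (L-1)|M|$. Depth alone is not enough, since a tree $T_m$ could be shallow and wide, so the regions must be amortized against $|M|$; this is exactly where \emph{maximality} of the well-connected set $M$ enters. The idea is that an appropriate non-cut vertex $\ell$ of $G[V\setminus M]$ lying in a branch of some $T_m$ could be promoted into $M$ unless doing so breaks condition (i) of a \wcs, and the latter forces $\ell$ to lie on every shortest WCP between certain vertices of $M$ (any such path already avoids $M$, so promoting a vertex of $V\setminus M$ can only obstruct (i) by blocking such a path); one then charges the ``surplus'' vertices of the wide regions to distinct vertices of $M$, using Propositions~\ref{p:articulation_point} and~\ref{p:orphan_neighbor} to eliminate the degenerate obstruction $M=N(u)\cup\{u\}$ for a maximum-degree vertex $u$. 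Carrying out this charging precisely, and pinning down the exact (deliberately loose) constant, is where the real effort lies; the rest is bookkeeping.
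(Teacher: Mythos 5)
Your proposal is not yet a proof: the decisive counting step is missing, and you say so yourself. The Voronoi-style decomposition into regions $R_m=\mu^{-1}(m)$ only controls the \emph{depth} of each region (every $v\in R_m$ lies within distance $L$ of $m$, via a chordless shortest path), but the proposition needs a bound on the \emph{sizes}, namely $|V\setminus M|\le (L-1)|M|$, and depth gives no such bound -- a single $R_m$ can be a shallow, wide tree containing far more than $L-1$ vertices. The paragraph where you invoke maximality to ``promote'' a non-cut vertex $\ell$ and then ``charge the surplus vertices of the wide regions to distinct vertices of $M$'' is only a plan: maximality of $M$ forbids adding one vertex at a time, and the obstruction for each candidate may be an articulation-point condition or the blocking of some path between members of $M$, but you give no mechanism that converts these per-vertex obstructions into a charging scheme in which each $m\in M$ absorbs at most $L-1$ outside vertices. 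That conversion is exactly the content of the proposition, so the gap is not bookkeeping.

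For contrast, the paper sidesteps the per-region counting problem entirely: fix a single vertex $u\in V-M$ and, for each $v\in M$, take one well-connected path $p_v$ from $u$ to $v$ (interior disjoint from $M$). Maximality is used once, globally: if some $w\in V-M$ were missed by all of these paths, $w$ would be neither an articulation point of $G[V-M]$ nor an orphan neighbor and could be added to $M$, a contradiction; hence the $|M|$ paths cover $V-M$. Since a shortest path inside $G[V-M]$ is an induced path of $G$, each $p_v$ has at most $L$ vertices, so $p_v'=p_v-v$ contributes at most $L-1$ vertices, giving $|V-M|\le (L-1)|M|$ and thus $|M|\ge |V|/L$. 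If you want to salvage your route, you would need an analogous covering statement for your trees $T_m$ (or replace the nearest-vertex assignment by a covering family of short paths); as written, the argument stops exactly where the theorem begins.
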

\begin{proof}
    % Denote $N_o$ as the set of orphan neighbors of $M$ and $AP$ as the set of articulation points of $G'(V-M)$. Clearly $N_o\bigcup AP=V-M$
    Let $u\in V-M$. 
    Then $u$ is either an orphan neighbor or an articulation point of $G'(V-M)$.
    For every $v\in M$, there exists a WCP $p_v$ connecting $u$ and $v$.
    Every vertex in $V-M$ should be passed through by at least one of those WCPs. 
    Otherwise, let $w$ be the vertex that is not passed through by any of the WCPs.
    Clearly, $w$ cannot be an articulation point or an orphan neighbor.
    Then $w$ can be added to $M$, which contradicts that $M$ is maximal.
    Let $p_v'=p_v-v$.
    Then $\bigcup_{v\in M}p'_v=V-M$. Meanwhile, $|p_v|\leq diam(G(V-M))\leq L$, where $diam(G(V-M))$ is the diameter of the induced subgraph by $V-M$.  We have $|V-M|\leq\sum_{v\in M}|p_v'|\leq |M|(L-1)$.
    On the other hand $|V-M|+|M|=|V|$. Hence $|M|\geq |V|/L$.
\end{proof}
\subsection{Exact Search-Based Algorithm}
We now establish a complete search algorithm to find a \lwcs.
\ifarxiv
% \jy{What do you mean by efficient here?}
A naive algorithm for finding the \lwcs starts by initializing a variable $k$ to 1. The algorithm then searches for a  \wcs of size $k$ in the graph. 
If such a set exists, it is considered a candidate for the \lwcs. 
The algorithm then proceeds to search for a \wcs of size $k+1$. 
If such a set is found, it is taken as the new candidate for the \lwcs. 
If no \wcs of size $k$ is found, the algorithm terminates and returns the \wcs of size $k-1$ as the \lwcs. 
This algorithm uses a simple iterative approach and keeps track of the size of the \lwcs found so far. 
By searching for larger \wcs and updating the candidate accordingly, it ensures that it finds the \lwcs.
When searching a well-connected vertex set of size $k$, there are $\binom{|V|}{k}$ possible subsets. 
For each subset $M$ with $|M|=k$, we could run $k$ times of BFS to check if the subgraph induced by $V-M+\{v\}$ is connected for each $v\in M$, in order to examine if it is a \wcs.
The naive algorithm runs in $O(2^{|V|}|V|(|V|+|E|))$.
To improve the efficiency of the algorithm, we propose a search-based algorithm. 

The algorithm is shown in Alg.~\ref{alg:dfs}.
\vspace{-3mm}
\fi
The algorithm employs a depth-first search (DFS) approach to exhaustively explore all possible vertex sets and select the one with the maximum size and the highest path efficiency ratio (PER).
The algorithm starts by defining three empty sets, $M$, $M^*$, and $visited$, 
where $M$ represents the current set of vertices being explored, $M^*$ represents the set with the maximum size and highest \per found so far, and $visited$ stores all previously explored vertex sets to avoid repeated explorations.
Then, the DFS search is initiated by calling the DFS function with the current set $M$, the set with the maximum size and highest PER found so far $M^*$, and the set of visited vertex sets visited as parameters.
The DFS function starts by checking if the current set $M$ has already been explored before. 
If it has, the function returns immediately. 
Otherwise, $M$ is added to the visited set. 
The function then checks if the current set $M$ is larger than the set with the maximum size and highest PER found so far $M^*$. If it is, then we update $M$ as the current best $M^*$. 
If $M$ has the same size as $M^*$, then the function compares their \per values, and the set with the higher \per becomes $M^*$.
% 
% After updating $M^*$, the algorithm checks whether exploring additional vertices is worth it. If the number of remaining candidate vertices plus the size of the current set $M$ is less than the size of $M^*$, the function returns immediately, as there is no point in exploring further. 

Otherwise, the function generates a list of candidate vertices $P$ that can be added to the current set $M$ without violating the \wcs condition. 
The function then loops over the candidate vertices and recursively calls the DFS function with the current set $M$ unioned with the current candidate vertex and the same $M^*$ and visited sets.

The algorithm continues until all possible vertex sets have been explored, or the condition for returning early is met. 
Similar to the algorithm for the \mwcs, we also perform additional checks. 
For each $v\in M$, we check if $N(v)\bigcup\{v\}$ can be larger than the current solution found to ensure that the final vertex set is the maximum one.
The algorithm's time complexity is dependent on the size and density of the graph $G$, as well as the number of candidate vertices generated at each recursive call. 
In the worst-case scenario, the algorithm has a time complexity of $O(2^{|V|}(|V|+|E|))$, where $V$ is the number of vertices in the graph $G$. 
However, the early termination condition in the DFS function helps avoid exploring unnecessary vertex sets and can significantly reduce the algorithm's execution time.
\begin{algorithm}
\begin{small}
\DontPrintSemicolon
\SetKwProg{Fn}{Function}{:}{}
\SetKw{Continue}{continue}
  % \KwIn{Starts $X_S$, goals $X_G$, preferred density $\rho^{*}$}
  \Fn{$\textsc{DfsSearch}(G)$}{
 \vspace{0.5mm}
 \caption{Largest Well-Connected Set\label{alg:dfs}}
 $M,M^{*},visited\leftarrow \emptyset,\emptyset,\emptyset$\;
 \vspace{0.5mm}
 $\texttt{DFS}(G,M,M^{*},visited)$\;
 \vspace{0.5mm}
    $M^{*}\leftarrow \texttt{AdditionalCheck}(M^*)$\;
 \vspace{0.5mm}
 \Return $M^{*}$ \;
}
\Fn{\textsc{DFS}($G,M,M^{*},visited$)}{
 \vspace{0.5mm}
\lIf{$M\in visited$}{\Return}
$visited.add(M)$\;
% \If{$\texttt{IsWellPathConnected}(C')=\text{false}$}{ \Return\;}
 \vspace{0.5mm}
\lIf{$|M|>|M^{*}|$ or ($|M|=|M^{*}|$ and $\texttt{PER}(M)<\texttt{PER}(M^{*})$)}{
    $M^{*}\leftarrow M$
}
 \vspace{0.5mm}
$AP\leftarrow \texttt{Tarjan}(G'(V-M))$\;
 \vspace{0.5mm}
$NB\leftarrow \texttt{FindOrphanNeighbor}(M)$\;
 \vspace{0.5mm}
$P\leftarrow P-(AP\bigcup NB)$\;
 \vspace{0.5mm}
\lIf{$|P|+|M|<|M^{*}|$}{
\Return
}
 \vspace{0.5mm}
\lForEach{$v$ in $P$}{
 \vspace{0.5mm}
    $\texttt{DFS}(G,M\bigcup\{v\},M^{*},visited)$
    }
}

\end{small}
\end{algorithm}
\vspace{-3mm}

\section{Applications in multi-robot navigation}\label{sec:applications}
We demonstrate how \wcs benefits prioritized multi-robot path planning (\mpp) on 
% 4-connected 
graphs. In a legal move, a robot may cross an edge if the edge is not used by another robot during the same move and the target vertex is not occupied by another robot at the end of the move.
% 
%In \mpp, given an undirected graph $G$ and $n$ robots with starts vertices $\mathcal{S}$ and goal vertices $\mathcal{G}$. 
% 
%Here we consider 4-connected graphs and a simplified robot model where in each step, a robot can move to a neighboring vertex in four cardinal directions or wait at its current position. 
% 
The task is to plan paths with legal moves for all robots to reach their respective goals.
%each robot moving it from its start position to its goal position while avoiding collisions with other robots.
% 
The makespan (the time for all robots to reach their goals) and the total arrival time are two common criteria to evaluate the solution quality.
Previous studies \cite{vcap2015complete,ma2019searching} have established the completeness of prioritized planning in well-formed infrastructures. 
Building on the foundation, we provide algorithms with completeness guarantees for non-well-formed environments.
\begin{definition}[Well-Formed \mpp]\label{def:wf_mpp}
An \mpp instance is well-formed if, for any robot $i$, a path connects its start and goal without traversing any other robots' start or goal vertex.    
\end{definition}

\begin{theorem}\label{theorem:wf_guarantee}
Well-formed \mpp is solvable using prioritized planning with any total priority ordering \cite{vcap2015complete,ma2019searching}.
\end{theorem}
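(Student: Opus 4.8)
The plan is to show that prioritized planning, processing the robots one at a time in any fixed order $1,2,\dots,N$, always succeeds on a well-formed instance. The induction is on the priority index: I would maintain the invariant that after robots $1,\dots,i$ have been planned (each now permanently parked at its goal vertex $g_1,\dots,g_i$), robot $i+1$ can still find a legal path from its start $s_{i+1}$ to $g_{i+1}$. The key geometric fact to exploit is Definition~\ref{def:wf_mpp}: there is a path $\pi_{i+1}$ from $s_{i+1}$ to $g_{i+1}$ that avoids every start/goal vertex of every other robot — in particular it avoids $g_1,\dots,g_i$ and $s_{i+2},\dots,s_N$. Since at the moment robot $i+1$ is routed the only occupied vertices are $g_1,\dots,g_i$, the path $\pi_{i+1}$ (padded with suitable waits) is collision-free with respect to the already-parked robots, so a valid path for robot $i+1$ exists; a shortest-path / space-time $A^*$ call inside prioritized planning will therefore return some feasible path, and we let robot $i+1$ park at $g_{i+1}$.

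The second thing to verify is that committing robot $i+1$ to stay at $g_{i+1}$ does not retroactively block the robots planned earlier — but this is immediate because those robots have already completed their moves and are sitting still at their goals, and $g_{i+1}\neq g_j$ for $j\le i$. One should also note that the lower-priority robots $i+2,\dots,N$ have not yet moved and sit at their starts; $\pi_{i+1}$ avoids those starts as well, so even if the planner is implemented to treat not-yet-moved robots as static obstacles at their start vertices, a feasible path still exists. Either way, the invariant is preserved, and by induction all $N$ robots reach their goals, giving completeness for an arbitrary priority order.

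The only mild subtlety — and the place I expect to spend the most care — is the timing/synchronization bookkeeping: one must argue that $\pi_{i+1}$, which is a purely topological walk avoiding certain vertices, can be realized as a legal \emph{timed} plan in which no edge is traversed simultaneously by two robots and no vertex is simultaneously occupied. Since all higher-priority robots are stationary, robot $i+1$ simply traverses $\pi_{i+1}$ one edge per timestep with no conflicts; the global makespan is finite because each $\pi_{i+1}$ has length at most $|V|-1$ (we may assume it is simple), so the invariant and the finiteness of the schedule both hold. This is exactly the argument of \cite{vcap2015complete,ma2019searching}, and I would present it as a short induction with the well-formed property doing all the work.
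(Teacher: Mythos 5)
First, note that the paper itself does not prove Theorem~\ref{theorem:wf_guarantee}; it imports the result from \cite{vcap2015complete,ma2019searching}, so there is no in-paper proof to compare against, and your outline is essentially the standard argument from those works. Within that argument, however, there is one genuine gap as you have written it: your invariant treats planning order as execution order. You assert that ``at the moment robot $i+1$ is routed the only occupied vertices are $g_1,\dots,g_i$,'' but in prioritized planning all robots execute concurrently from time $0$; robot $i+1$ is planned against the full space-time trajectories of robots $1,\dots,i$, which are still moving while robot $i+1$ moves. The correct existence witness is the trajectory ``wait at $s_{i+1}$ until every higher-priority robot has reached its goal, then follow $\pi_{i+1}$,'' and this is feasible only if no higher-priority trajectory ever visits $s_{i+1}$ during the waiting phase. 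Nothing in vanilla prioritized planning guarantees that: the space-time A* call for robot $j<i+1$ is free to route $j$ through $s_{i+1}$, and then the waiting argument collapses and completeness is no longer guaranteed.

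The fix — and the reason the cited results hold — is precisely the remark you relegate to an aside: the planner must constrain each robot to avoid the start vertices of all lower-priority (not-yet-planned) robots, i.e., treat them as static obstacles (the ``revised'' prioritized planning of \cite{vcap2015complete}). Well-formedness (Definition~\ref{def:wf_mpp}) guarantees this constrained search never becomes infeasible, because the wait-then-$\pi_j$ trajectory avoids all other robots' starts and goals, so a valid plan for robot $j$ always exists under the constraint; and the constraint in turn guarantees inductively that higher-priority trajectories never cross $s_{i+1}$, which is what makes your witness path for robot $i+1$ collision-free. So the statement is not that arbitrary-order prioritized planning with unconstrained space-time A* succeeds, but that the version which shields unplanned robots' starts does; you should promote that condition from an ``even if'' remark to the load-bearing step of the induction. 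The rest of your outline (distinctness of all starts and goals, padding with waits, finite makespan) is fine.
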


When addressing non-well-formed \mpp, we adopt a simple, effective strategy similar to \cite{guo2022sub} to convert start and goal configurations to intermediate well-connected configurations so that the resulting problems are guaranteed to be solvable by prioritized planning with any total priority ordering.
We call the algorithm \unpp - \textbf{un}labeled \textbf{p}rioritized \textbf{p}lanning. 

We compute a \mwcs/\lwcs $M$ offline.   
The first step in the algorithm is to assign the $2n$ vertices, $\mathcal{S}'',\mathcal{G}''$ in $M$ as the intermediate start vertices and goal vertices.
This is done by solving a min-cost matching problem using the Hungarian algorithm \cite{Kuhn1955}. Collision-free paths are easy to plan in the unlabeled setting with optimality guarantees on makespan and total distance \cite{yu2013multi,yu2012distance}.
% 
%Next, we apply an unlabeled multi-robot path planning algorithm that finds a collision-free path for each robot without labels (i.e., the paths are interchangeable), on the assigned well-connected vertices.
% 
%Unlike the labeled version which is intractable, there exist complete polynomial-time algorithms for optimally solving unlabeled \mpp in terms of makespan and total distance \cite{yu2013multi,yu2012distance}. 
% 
The output of this function is a set of collision-free paths for the robots that route them to a well-formed configuration and the intermediate labeled starting and goal positions $\mathcal{S}', \mathcal{G}'$.
The PrioritizedPlanning function is then called on the resulting intermediate starting and goal positions to generate a deadlock-free path for each robot.
Finally, the paths generated by the Unlabeled Multi-Robot Path Planning and Prioritized Planning functions are concatenated to produce a final solution.
% 
% The \unpp algorithm is an efficient way to find a solution for the multi-robot path planning problem, taking into account the \wcs, which ensures that the paths of the robots do not interfere with each other.
\vspace{-4mm}

\begin{algorithm}
\begin{small}
\DontPrintSemicolon
\SetKwProg{Fn}{Function}{:}{}
\SetKw{Continue}{continue}
  \KwIn{Starts $\mathcal{S}$, goals $\mathcal{G}$, \lwcs/\mwcs $M$}
  \Fn{\textsc{UNPP}({$\mathcal{S},\mathcal{G}$})}{
 \caption{ \unpp\label{alg:unpp}}
$\mathcal{S}'',\mathcal{G}''\leftarrow \texttt{Assignment}(\mathcal{S},\mathcal{G},M)$\;
$P_s,\mathcal{S}'\leftarrow \texttt{UnlabeledMRPP}(\mathcal{S},\mathcal{S}'')$\;
$P_g,\mathcal{G}'\leftarrow \texttt{UnlabeledMRPP}(\mathcal{G},\mathcal{G}'')$\;
$P_m\leftarrow \texttt{PrioritizedPlanning}(\mathcal{S}',\mathcal{G}')$\;
$solution\leftarrow \texttt{Concat}(P_s,P_g,P_m)$\;
\Return $solution$\;
}
\end{small}
\end{algorithm}
\vspace{-7mm}

\begin{theorem}
Denote $n_c$ as the size of the \lwcs of graph $G$, for any \mpp instance with number of robots less than $n_c/2$, regardless of the distribution of starts and goals, \unpp is complete with respect to any priority ordering. 
\end{theorem}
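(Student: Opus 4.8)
The plan is to show \unpp always returns a valid plan by analyzing its three time‑sequential stages in isolation and then arguing that their concatenation is legal. Write $n$ for the number of robots and let $M$ be the \lwcs used by the algorithm, so $|M|=n_c$. The hypothesis $n<n_c/2$ gives $2n<n_c=|M|$, so the call $\texttt{Assignment}(\mathcal{S},\mathcal{G},M)$ can (and does) pick two \emph{disjoint} subsets $\mathcal{S}'',\mathcal{G}''\subset M$ with $|\mathcal{S}''|=|\mathcal{G}''|=n$; the Hungarian min‑cost matching only picks a cheap such pair, but for completeness any disjoint pair works. Also $n<n_c/2\le|V|/2$, so in every stage the graph has at least one unoccupied vertex. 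Note that no assumption is placed on $\mathcal{S},\mathcal{G}$: they may overlap $M$, overlap each other, etc.

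First I would dispatch the two unlabeled stages. Since $G$ is connected and $n<|V|$, an unlabeled (anonymous) multi‑robot routing instance is always solvable (e.g.\ \cite{yu2013multi,yu2012distance}); hence $\texttt{UnlabeledMRPP}(\mathcal{S},\mathcal{S}'')$ produces a collision‑free $P_s$ taking the $n$ robots from $\mathcal{S}$ onto $\mathcal{S}''$ and induces a labeled configuration $\mathcal{S}'$ in which robot $i$ sits at some $s'_i\in\mathcal{S}''$. Symmetrically $\texttt{UnlabeledMRPP}(\mathcal{G},\mathcal{G}'')$ is feasible; its path set reversed is still collision‑free, and it induces a labeled configuration $\mathcal{G}'$ in which placing robot $i$ at $g'_i\in\mathcal{G}''$ and executing the reversed $P_g$ delivers robot $i$ to its true goal $g_i$. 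The only bookkeeping is that these index‑to‑vertex maps ($i\mapsto s'_i$ and $i\mapsto g'_i$) are exactly the labeled configurations $\mathcal{S}',\mathcal{G}'$ handed to the middle stage.

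The crux is the middle stage $\texttt{PrioritizedPlanning}(\mathcal{S}',\mathcal{G}')$. Here robot $i$ must travel from $s'_i$ to $g'_i$, and the set of all intermediate start/goal vertices is $\mathcal{S}'\cup\mathcal{G}'\subseteq M$, a collection of $2n$ \emph{distinct} vertices (distinct within $\mathcal{S}'$, within $\mathcal{G}'$, and disjoint across them). Fix robot $i$: every other robot's intermediate start or goal lies in $(\mathcal{S}'\cup\mathcal{G}')\setminus\{s'_i,g'_i\}\subseteq M\setminus\{s'_i,g'_i\}$. By property (i) of Definition~\ref{def:well_connected_set}, there is a path from $s'_i$ to $g'_i$ avoiding every vertex of $M\setminus\{s'_i,g'_i\}$, hence avoiding all other robots' intermediate starts and goals. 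Thus the intermediate instance $(\mathcal{S}',\mathcal{G}')$ is well‑formed in the sense of Definition~\ref{def:wf_mpp}, so by Theorem~\ref{theorem:wf_guarantee} it is solved by prioritized planning \emph{with any} total priority ordering, yielding $P_m$.

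Finally I would concatenate $P_s$, then $P_m$, then the reversed $P_g$. The stages run one after another in time, so there are no inter‑stage collisions, and the terminal configuration of each stage matches the initial configuration of the next ($\mathcal{S}'$ between $P_s$ and $P_m$; $\mathcal{G}'$ between $P_m$ and the reversed $P_g$), so the result is a legal plan moving robot $i$ from $s_i$ to $g_i$. Because the only stage whose solvability could depend on the priority order — the middle one — succeeds for every ordering, \unpp is complete with respect to any priority ordering, as claimed. I expect the main obstacle to be conceptual rather than computational: one must (a) justify that the two unlabeled stages are unconditionally solvable under the available free‑vertex count, and (b) recognize that placing \emph{all} intermediate endpoints inside a single \wcs $M$ is precisely what makes the middle instance well‑formed; the distinctness/disjointness accounting underpinning (b) is exactly where the hypothesis $n<n_c/2$ is consumed.
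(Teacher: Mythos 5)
Your proposal is correct and follows essentially the same route as the paper: the hypothesis $n<n_c/2$ lets the algorithm place $2n$ distinct intermediate endpoints inside the \wcs $M$, property (i) of the \wcs definition then makes the intermediate instance well-formed in the sense of Definition of well-formed \mpp, and Theorem~\ref{theorem:wf_guarantee} gives solvability under any priority ordering. The only difference is that you also spell out the unconditional solvability of the two unlabeled stages and the legality of concatenation, details the paper's proof leaves implicit.
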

\begin{proof}
When the number of robots $n\leq n_c/2$, it is always possible to select such $\mathcal{S}',\mathcal{G}'$ where $\mathcal{S}'\bigcap\mathcal{G}'=\emptyset$.
Since $\mathcal{S}'\bigcup \mathcal{G}'\subseteq M$, by the definition of \wcs, $\mathcal{S}'\bigcup \mathcal{G}'$ is also a \wcs.
Therefore, the resulting \mpp problem which requires routing robots from $\mathcal{S}'$ to $\mathcal{G}'$ is well-formed.
By Theorem.~\ref{theorem:wf_guarantee}, prioritized planning is guaranteed to solve the subproblem using any priority ordering.
\end{proof}
\unpp  runs in polynomial time for \mpp instances with $n\leq n_c/2$.
We can use the max-flow-based algorithm \cite{yu2013multi} to solve the unlabeled \mpp, which takes $O(n|E|D(G))$ where $D(G)$ is the diameter of the graph $G$, if we use \cite{ford1956maximal} (faster max-flow algorithm can also be used here) to solve the max-flow problem.
In the worst case, an unlabeled \mpp requires $n+|V|-1$ makespan to solve \cite{yu2012distance}.
For the prioritized planning applied on well-formed instances, the makespan is upper bounded by $nD(G)$. 
As we use spatiotemporal A* to plan the individual paths while avoiding collisions with higher-priority robots on a time-expanded graph with edges no more than $n|E|D(G)$ and such a solution is guaranteed to exist,
the worst time complexity is $O(n^2|E|D(G))$.
In summary, \unpp yields worst case time complexity of $O(n^2|E|D(G))$ and its makespan is upper bounded by $2(n+|V|-1)+nD(G)$.

For $\frac{n_c}{2}<n<n_c$, arbitrary priority ordering does not guarantee a solution.
For some robots, its intermediate start vertex in $\mathcal{S}'$ has to be the intermediate goal vertex in $\mathcal{G}'$ of another robot when assigned from the \wcs $M$.
The resulting subproblem is not well-formed.
However, it is possible to solve such an instance by breaking it into several sub-problems and using specific priority ordering to solve it.
To do this, we can first establish a dependency graph to determine the priority ordering of robots. 
The dependency graph consists $n$ nodes representing the robots. If $s_i'=g_j'$, we add a directed edge from node $i$ to node $j$, meaning that $j$ should have higher priority than $i$.  If the resulting dependency graph is a DAG, topological sort can be performed on it to get the priority ordering.
When encountering a cycle, as $n<n_c$, we can break the cycle by moving one of the robots in this cycle to a buffer vertex in $C-\{\mathcal{S}'\bigcup\mathcal{G}'\}$ and perform the topological sort on the remaining robots. 

Finally, we briefly illustrate the application of \wcs in multi-robot pickup and delivery (\mapd) \cite{Ma2017LifelongMP}.
In well-formed \mapd,  each robot can rest in a non-task endpoint forever without preventing other robots from going to their task endpoints, i.e. pickup stations.
The layout of the endpoints forms a \wcs of the graph of the environment.
While it is desirable to increase the number of robots and the endpoints as many as possible to maximize space utilization, it is also important to keep a well-connected layout so that the robots will not block each other for better pathfinding. 
Thus, the maximum number of endpoints is equal to $n_c$, and at most $n_c-1$ robots can be used in the \mapd.

\section{Experiments}\label{sec:eval}
In our evaluation, we first test algorithms that compute \wcs for grids and a set of benchmark maps and then perform evaluations on \mpp problems. 
Since the grids and maps used in our experiments are either 4-connected or 8-connected, the solution we find without additional checks will always be larger than the number of neighbors of a vertex $v$ plus one (i.e., $|N(v)|+1$). 
Therefore, we can safely omit the additional checks in our solution.
All experiments are performed on an Intel® CoreTM i7-6900K CPU
at 3.2GHz with 32GB RAM in Ubuntu 18.4 LTS and implemented in C++. 
\subsection{Grid Experiments}
We test the algorithms on $m\times m$ 4/8-connected grids with varying side lengths $m$. 
The result is shown in Table~\ref{tab:gridexp}.
In ``Random" and ``Greedy", we run the \mwcs algorithm 50 times and return the set with the maximum size.
Random randomly chooses a node from the candidates $P$ and adds it to the set.
In Greedy, to select the next candidate to add to the current \wcs, we sort the candidate nodes in ascending order based on their total shortest distance from any node in the current \wcs. We then select the candidate with the smallest total shortest distance as the next node to add to the \wcs.
To evaluate the running time of Random and Greedy, we take the average of the total execution time over the 50 runs of the algorithm.
To evaluate the path efficiency of the maximum(maximal) \wcs found by each algorithm, we treat each node in  $V$ as the reference point and compute the \per for each node. We then take the average of the \per values to obtain a measure of the overall path efficiency of the algorithms.
In DFS, we set a time limit of 600 seconds to search for a solution. If the time limit is reached before a solution is found, we report the best solution found so far. 

\begin{table*}
\vspace{2mm}
\centering
% \scriptsize
% \tiny
\fontsize{6.8}{8}\selectfont
  \begin{tabular}{|c|ccc|ccc|ccc!{\vrule width 2pt}ccc|ccc|ccc|ccc|ccc|}
        \hline
        \multirow{2}{*}{Side Len} & \multicolumn{3}{c|}{Random$_4$} & \multicolumn{3}{c|}{Greedy$_4$} & \multicolumn{3}{c!{\vrule width 2pt}}{DFS$_4$} & \multicolumn{3}{c|}{Random$_8$} & \multicolumn{3}{c|}{Greedy$_8$} & \multicolumn{3}{c|}{DFS$_8$} \\
        \cline{2-19}
         & $|M|$ & PER & Time & $|M|$ & PER & Time & $|M|$ & PER & Time & $|M|$ & PER & Time & $|M|$ & PER & Time & $|M|$ & PER & Time \\
        \hline
        5  & 14 & 0.89 & 0   &11     & 0.89 &0 &\color{red}{14}&0.89 &17.79 &20   &0.97 &0   &20   &0.93 &0    &\color{red}{20}& 0.97& 336.2 \\
        10 & 55 & 0.60 & 0   &52     & 0.69 &0 &\textbf{60}        &0.67 &600   &72   &0.52 &0   &73   &0.96 &0    &\textbf{74}             &0.85 & 600 \\
        \ifarxiv
        15 & 124& 0.62 & 0.02&122    & 0.73 &0.04   &\textbf{138}       &0.71 &600   &161  &0.60 &0.04&160  &0.85 &0.06 &\textbf{162}            &0.85 &  600\\
            \fi
        20 & 220& 0.73 & 0.07&238    & 0.74 &0.2    &\textbf{242}       &0.67 &600   &283  &0.46 &0.1 &280  &0.82 &0.3  &\textbf{285}            &0.85 &  600\\
         \ifarxiv
        25 & 238& 0.53 & 0.2 &372    & 0.64 &0.7    &\textbf{378}       &0.72 &600   &\textbf{447}  &0.46 &0.3 &434  &0.86 &0.9  &445            &0.64 &  600\\
        \fi
        30 & 487& 0.53 & 0.4 &561    & 0.62 &2.0    &\textbf{561}       &0.68 &600   &\textbf{645}  &0.47 &0.7 &622  &0.84 &2.3  &642            &0.54 & 600\\
        \ifarxiv
        35 & 667& 0.35 & 0.8 &765    & 0.58 &4.7    &\textbf{765}       &0.71 &600   &\textbf{875}  &0.62 &1.3 &842  &0.79 &5.3  &872            &0.62 &  600\\
        \fi
        40 & 872& 0.47 & 1.4 &992    & 0.72 &9.8    &\textbf{992}       &0.70 &600   &1137 &0.56 &2.4 &1097 &0.79 &11.0 &\textbf{1139}           &0.49 &  600\\
        \ifarxiv
        45 & 1098&0.55 & 2.4 &1245   & 0.57 &18.9   &\textbf{1245}      &0.69 &600   &1441 &0.60 &4.1 &1384 &0.82 &22.2 &\textbf{1443}           &0.36 &  600\\
        \fi
        50 & 1360&0.33 & 3.9 &1588   & 0.64 &35.9   &\textbf{1588}      &0.66 &600   &1778 &0.51 &6.5 &1705 &0.82 &39.8 &\textbf{1785}           &0.50 &  600\\
        \hline
    \end{tabular}
\caption{Grid Experiment\label{tab:gridexp} on 4/8-connected square grids. The numbers in red color are optimal solutions found by DFS.}
%\vspace{-2mm}
\end{table*}

Though DFS only finds guaranteed optimal solutions on small grids, the final vertex size it returned is usually larger than the other two methods and has better \per.
Greedy finds larger \wcs on 4-connected grids than Random.
Interestingly, on 8-connected grids, the \mwcs found by Greedy is smaller.
\per of Greedy is generally better than Random.

Through linear regression, we found that on grids, the size of the maximum(maximal) \wcs $|M|$ founded by these algorithms is linearly related to the number of vertices $|V|$. 
Specifically, on 4-connected grids, $|M| \sim 0.63|V|$, and on 8-connected grids, $|M| \sim 0.72|V|$. 
This means that in a square parking lot (or other well-formed infrastructures), if it is considered a 4-connected grid, at most about $63\%$ of the space can be used for parking.

\subsection{Benchmark Maps}
We select several maps from \href{https://movingai.com/benchmarks/grids.html}{2D Pathfinding Benchmarks}~\cite{sturtevant2012benchmarks}.
Here, we use Greedy to compute the suboptimal \lwcs as most of the maps are too large to perform DFS search.
For maps that are not connected, the largest connected component is used.  
The result is presented in Table.~\ref{tab:map_exp}.
And some examples are shown in Fig.~\ref{fig:benchmark_maps}.
Our algorithm is efficient on large and complex maps with tens of thousands of vertices.
The computed vertex set size is roughly $50\%$-$60\%$ of $|V|$ for 4-connected graphs, and $60\%$-$70\%$ for 8-connected graphs.

\begin{figure}[!hpbt]
\vspace{-2mm}
    \centering
  \begin{overpic}               
        [width=1\linewidth]{./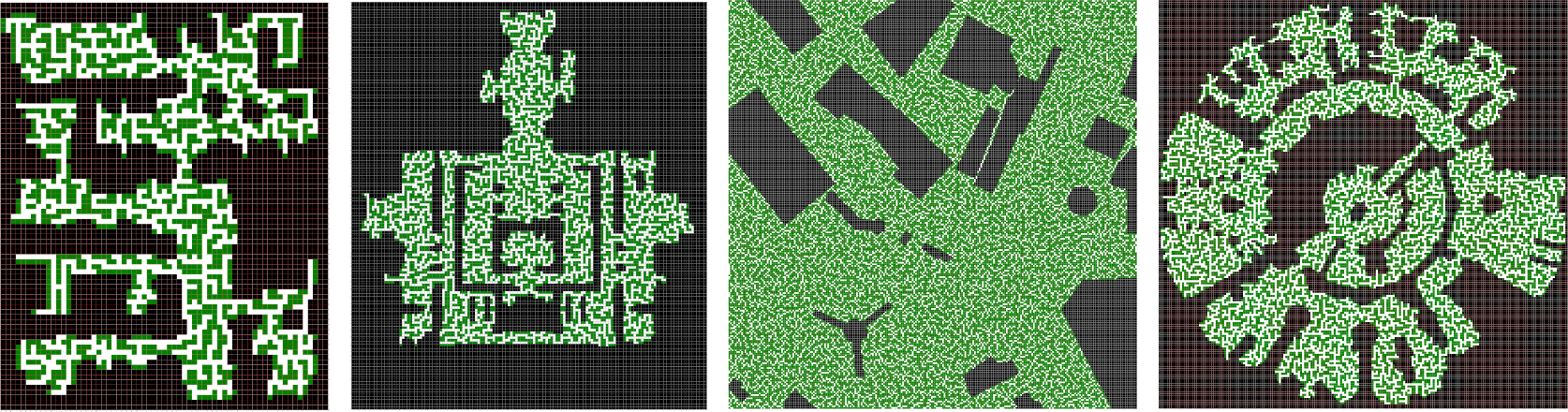}
             \small
             % \put(12.5, 36.5) {(a)}
             % \put(47.5,36.5) {(b)}
             % \put(82.5, 36.5) {(c)}
             \put(9.0, -3) {(a)}
             \put(31.5, -3) {(b)}
             \put(57.5, -3) {(c)}
             \put(85.5, -3) {(d)}
        \end{overpic}
\vspace{-1mm}
    \caption{Examples of the computed \mwcs (colored in green) in different 4-connected grid maps. (a) den312d. (b) ht$\_$chantry. (c) Shanghai$\_$0$\_$256.  (d) lak503d.Zoom in on the digital version to see more details.}
    \vspace{-3mm}
    \label{fig:benchmark_maps}
  \vspace{-5mm}
\end{figure}% 

\begin{table*}
    \centering
    % \scriptsize
    \fontsize{6.9}{8.9}\selectfont
    \begin{tabular}{|l|c|c!{\vrule width 2pt}c|c|c|c!{\vrule width 2pt}c|c|c|c|l|}
        \hline
        \textbf{Map Name}& \textbf{Map grid size} & \;\textbf{\#Vertices} $\mathbf{|V|}$\; & \textbf{\#Edges} $\mathbf{|E_4|}$ & $\mathbf{Time_4 (s)}$ & ${\mathbf{|M_4|}}$&$\mathbf{PER_4}$& \textbf{\#Edges} $\mathbf{|E_8|}$ & $\mathbf{Time_8 (s)}$ & ${\mathbf{|M_8|}}$&$\mathbf{PER_8}$\\ \hline
    
        arena& $49\times 49$ & 2,054 & 3,955 & 2.33  &1,113& 0.68 & 7813 &3.76 &1455 &0.52\\ \hline
        brc202d& $481\times 530$ & 43,151 & 81,512 & 1,685  &22,659 &0.61 &160,277 &2,668 &29,973 & 0.63\\ \hline
        den001d& $80\times 211$ & 8,895 & 16,980 & 20.1 & 4859 &0.77 &33392 & 92.55 & 6233& 0.51 \\ \hline
        den020d& $118\times 89$ & 3102 & 0.12 & 4.48 &1599 &0.89 & 10869 & 9.04 &2104 &0.699\\ \hline
        den312d& $81\times 65$ & 2,445 & 4,391 &3.18   &1,247&0.701 &8,464 & 5.11& 1,663 &0.708\\ \hline
        hrt002d&$50\times 49$ & 754 & 1300 & 0.24 & 377 &0.87&2489&0.38& 510&0.68 \\ \hline
        ht$\_$chantry& $141\times 162$ & 7,461 & 13,963 & 38.87 & 3,889 &0.45 &27222&60.95&5183&0.37 \\ \hline
        lak103d& $49\times 49$ & 859 & 1509 & 0.32  &438&0.84 &2869&0.51&584&0.58\\ \hline
        lak503d& $194\times 194$ & 17,953 & 33,781 & 258.89 &9484 & 0.58& 66,734& 415.60&12,482& 0.48\\ \hline
        lt$\_$warehouse& $130\times 194$ & 5,534 & 10,397 & 18.67 & 2,895& 0.87&20306&31.72&3858&0.63\\ \hline
        NewYork$\_$0$\_$256& $256\times 256$ & 48299 & 94068 & 2000 &26025 &0.40&186935&3469&34054&0.35\\ \hline
        orz201d& $45\times 47$ &745  & 1342 & 0.23 &389 & 0.73& 2604&0.39&513&0.61\\ \hline
        ost003d& $194\times 194$ & 13,214 & 24,999 & 131.82& 7,004 &0.88&49,437&206.30 &9,221 &0.59\\ \hline
        
        random-32-32-20& $32\times 32$ & 819 & 1270 & 0.26 &375 &0.66 &2,487&0.44& 533&0.55 \\ \hline
        Shanghai$\_$0$\_$256& $256\times 256$ & 48,708 & 95,649 & 2,119 & 26,453 &0.32 &190,581 &3,542&34,501 &0.29 \\ \hline
    \end{tabular}
    \caption{Computed suboptimal \lwcs size in selected 4/8-connected maps.      
    \vspace{-5mm}
    \label{tab:map_exp} }
\end{table*}
\subsection{Evaluations of \mpp}
Lastly, we examine the effectiveness of our proposed \mpp method on selected benchmarks. We compare our proposed method with two other prioritized planners, HCA*\cite{silver2005cooperative} and PIBT\cite{Okumura2019PriorityIW}.
For each map, a maximal vertex set is precomputed using the Greedy method. 
To conduct the experiments, we randomly generate 50 instances for each map and the number of robots $n$.
The results are shown in Fig.~\ref{fig:orz201d}-\ref{fig:hrt002d}.
The experimental results demonstrate that our proposed method significantly improves the success rate compared to HCA* and PIBT. Furthermore, although the solution quality is not optimal, it is still reasonably good.
\ifarxiv
It should be emphasized that the proposed method applies to robots that consider kinodynamics. Specifically, in the case of car-like robots, a simple modification can be made by replacing the traditional A* algorithm with the hybrid A* algorithm for single-robot path planning. This adjustment enables the method to cater to car-like motion's unique dynamics and constraints.
\fi
\begin{figure}[!htpb]
    \centering
    \includegraphics[width=1\linewidth]{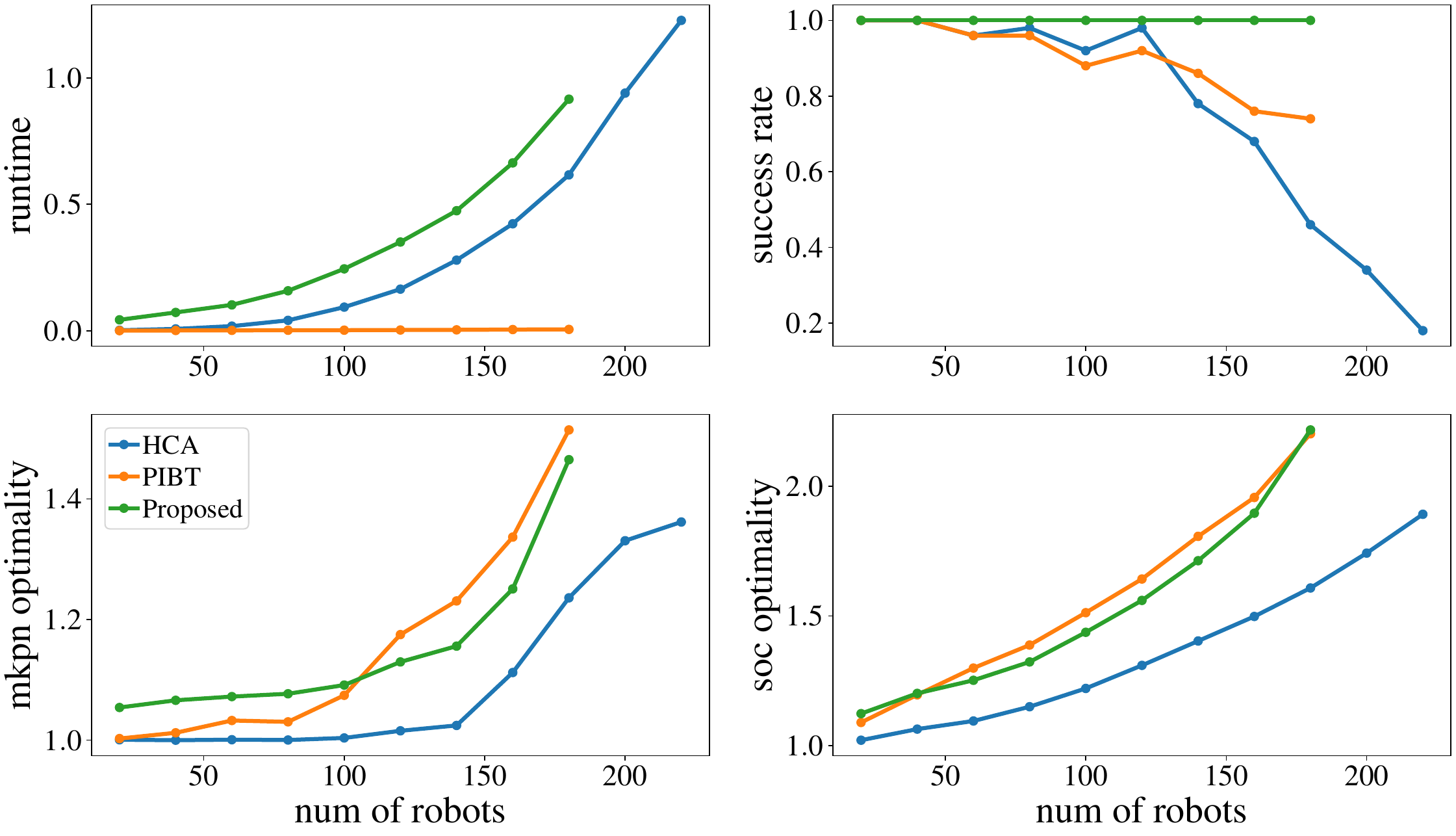}
    \put(-227, 106){\includegraphics[width=0.12\linewidth]{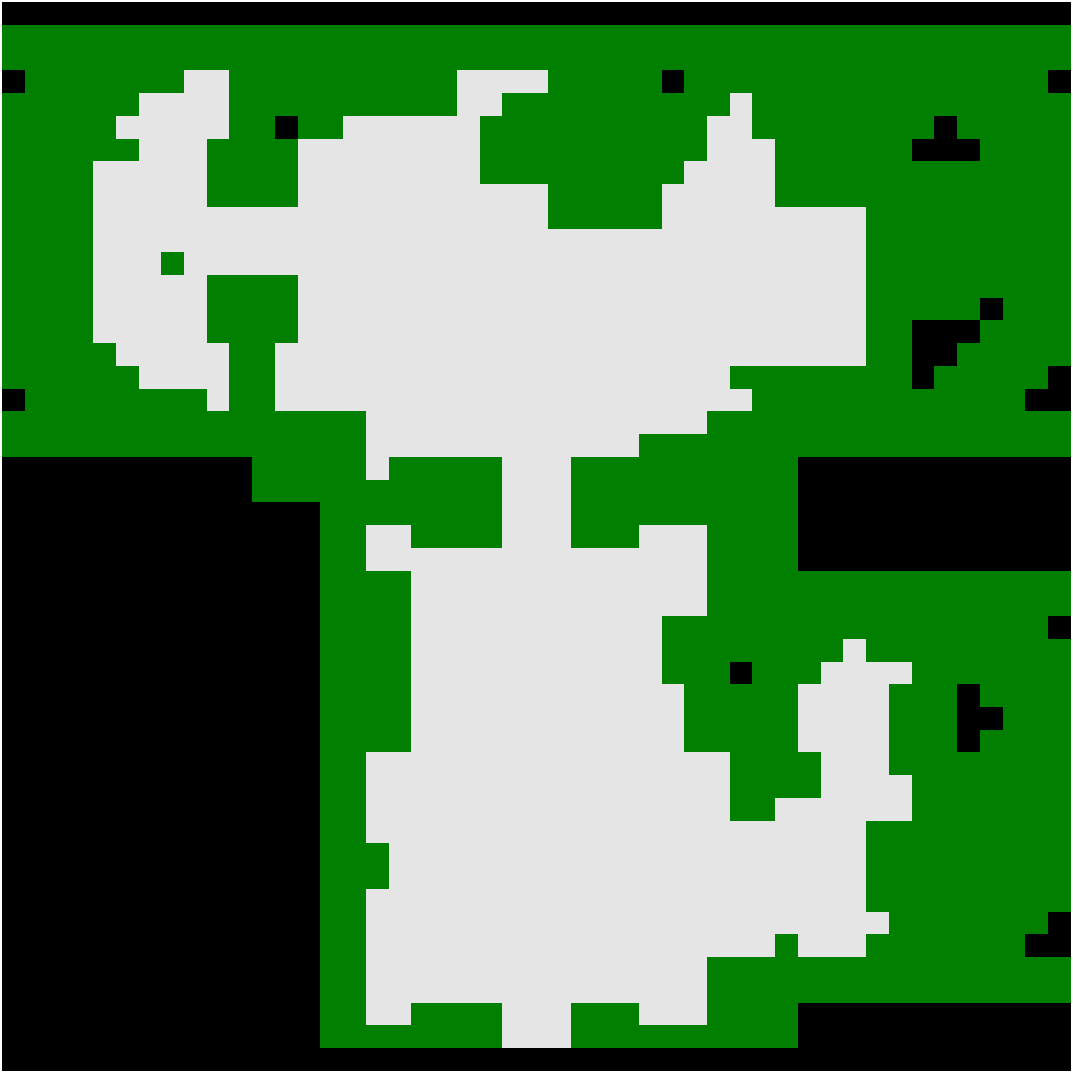}}
    \caption{Experimental results for map orz201d, including computation time, success rate, makespan optimality, and soc optimality, for HCA, PIBT, and the proposed method.}
    \vspace{-4mm}
    \label{fig:orz201d}
\end{figure}

\begin{figure}[!htpb]
    \centering
    \includegraphics[width=1\linewidth]{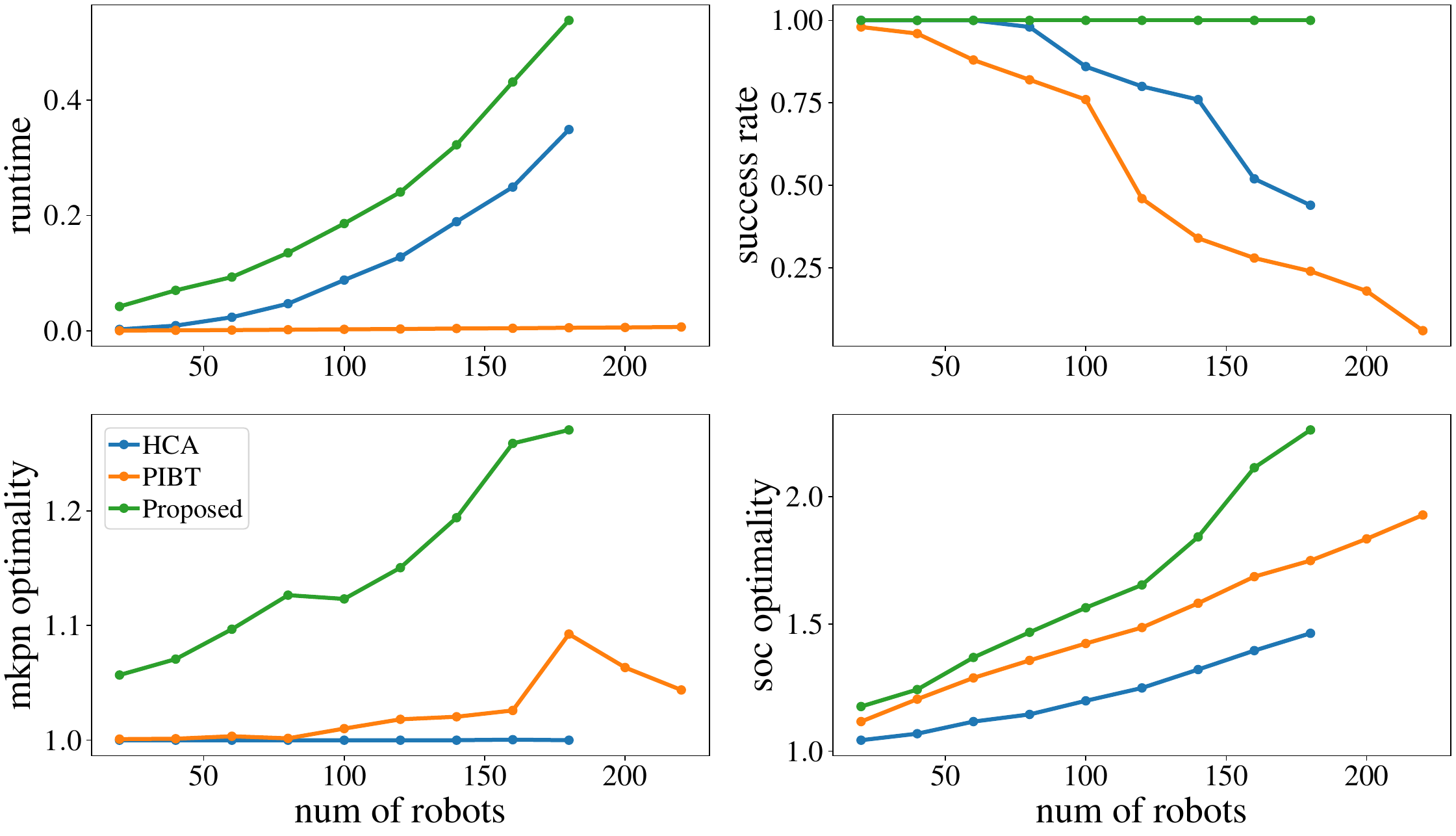}
    \put(-227, 106){\includegraphics[width=0.12\linewidth]{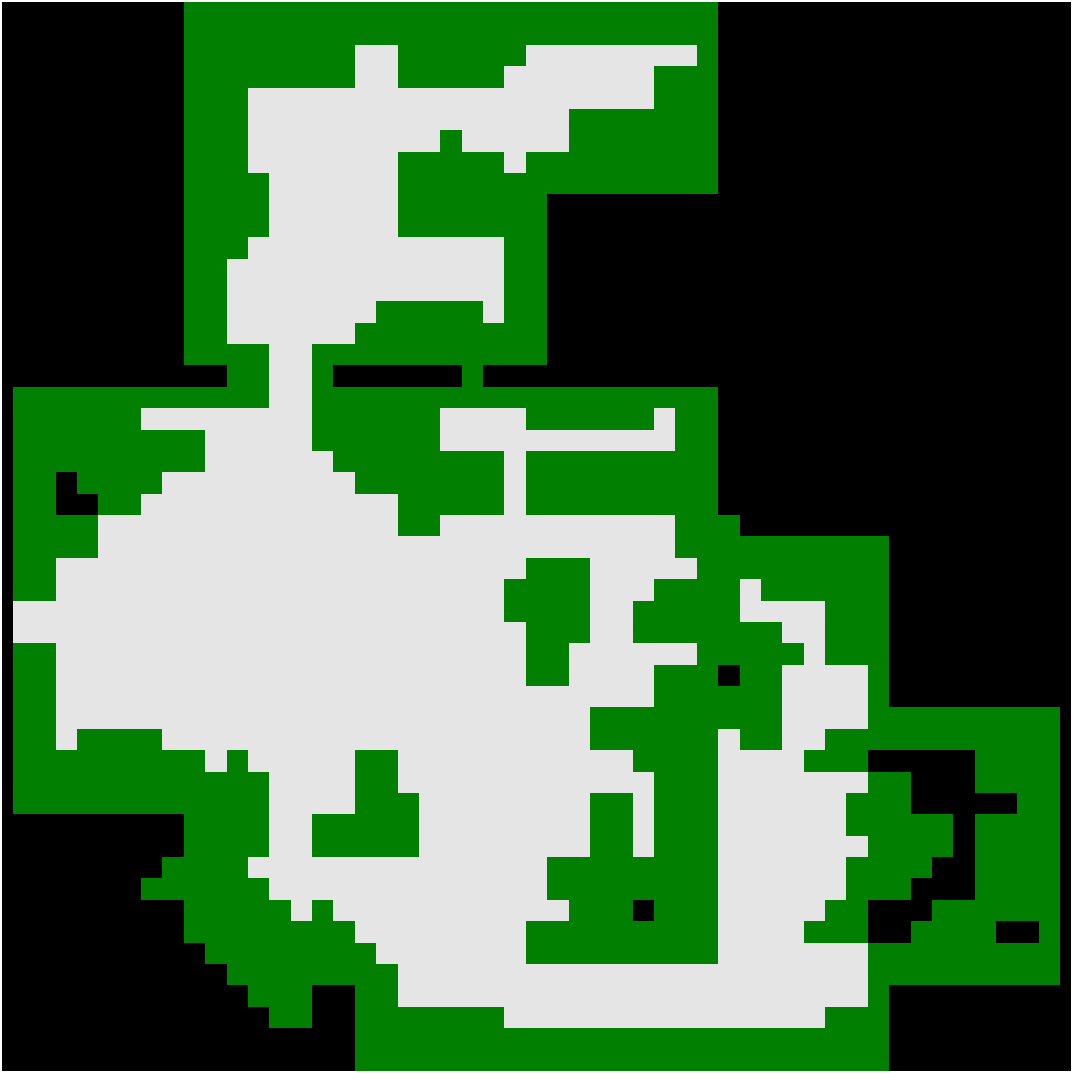}}
    \caption{Experimental results for map hrt002d, including computation time, success rate, makespan optimality, and soc optimality, for HCA, PIBT, and the proposed method.}
    \vspace{-6mm}
    \label{fig:hrt002d}
\end{figure}
\section{Conclusions}\label{sec:conclusion}
In this paper, we have presented a comprehensive study of the \lwcs problem and its applications in multi-robot path planning. 
We provided a rigorous problem formulation and developed two algorithms, an exact optimal algorithm and a suboptimal algorithm, to solve the problem efficiently. 
We have shown that the problem has various real-world applications, such as parking and storage systems, multi-robot coordination, and path planning. 
Our algorithms have been evaluated on various maps to demonstrate their effectiveness in finding solutions. 
Moreover, we have integrated the \lwcs problem with prioritized planning to plan paths for multi-robot systems without encountering deadlocks. 
% 
% Our study contributes to the understanding of the complexity of multi-robot path planning problems and the number of robots and provides a solid foundation for future research in this field. 
Our study enhances comprehension of the relationship between multi-robot path planning complexity, the number of robots, and graph topology, laying a robust groundwork for future research in this domain.
In future work, we plan to investigate the performance of our algorithms in more complex environments and explore their scalability in solving larger instances of the problem.
Additionally, we aim to explore the potential of our algorithms in real-world applications and examine their robustness against uncertainties and disruptions.

%%%%%%%%%%%%%%%%%%%%%%%%%%%%%%%%%%%%%%%%%%%%%%%%%%%%%%%%%%%%%%%%%%%%%%%%%%%%%%%%
% \section*{Appendix}

% \section*{ACKNOWLEDGMENT}

% \bibliographystyle{plainnat}
\bibliographystyle{IEEEtran}
% %\bibliography{references}
\bibliography{all}
%%%%%%%%%%%%%%%%%%%%%%%%%%%%%%%%%%%%%%%%%%%%%%%%%%%%%%%%%%%%%%%%%%%%%%%%%%%%%%%%

\end{document}